\documentclass{article}

\usepackage{microtype}
\usepackage{graphicx}
\usepackage{subcaption}
\usepackage{booktabs} 

\usepackage{hyperref}

\usepackage[accepted]{icml2026}

\usepackage{amsmath}
\usepackage{amssymb}
\usepackage{mathtools}
\usepackage{amsthm}
\usepackage{enumitem}
\usepackage{xspace}
\usepackage{algorithm}
\usepackage{algorithmic}
\usepackage{multirow}
\usepackage[table]{xcolor}
\usepackage{pifont}
\usepackage[capitalize,noabbrev]{cleveref}

\def \model{{\scshape EnergyFlow}\xspace}
\theoremstyle{plain}
\newtheorem{theorem}{Theorem}[section]
\newtheorem{proposition}[theorem]{Proposition}
\newtheorem{lemma}[theorem]{Lemma}
\newtheorem{corollary}[theorem]{Corollary}
\theoremstyle{definition}
\newtheorem{definition}[theorem]{Definition}
\newtheorem{assumption}[theorem]{Assumption}
\theoremstyle{remark}
\newtheorem{remark}[theorem]{Remark}

\newcommand{\RETURN}{\STATE \textbf{return} }
\icmltitlerunning{Recovering the Hidden Reward in Diffusion-Based Policies}

\begin{document}

\twocolumn[
  \icmltitle{Recovering Hidden Reward in Diffusion-Based Policies}

  \icmlsetsymbol{equal}{*}

  \begin{icmlauthorlist}
    \icmlauthor{Yanbiao Ji}{sjtu}
    \icmlauthor{Qiuchang Li}{sjtu}
    \icmlauthor{Yuting Hu}{sjtu}
    \icmlauthor{Shaokai Wu}{sjtu}
    \icmlauthor{Wenyuan Xie}{sjtu}
    \icmlauthor{Guodong Zhang}{sjtu}
    \icmlauthor{Qicheng He}{sjtu}
    \icmlauthor{Deyi Ji}{moxin}
    \icmlauthor{Yue Ding}{sjtu}
    \icmlauthor{Hongtao Lu}{sjtu}
  \end{icmlauthorlist}

  \icmlaffiliation{sjtu}{Shanghai Jiao Tong University}
  \icmlaffiliation{moxin}{KOKONI 3D, Moxin Technology}

  \icmlcorrespondingauthor{Yue Ding}{dingyue@sjtu.edu.cn}
  \icmlcorrespondingauthor{Hongtao Lu}{htlu@sjtu.edu.cn}

  \icmlkeywords{Diffusion Policy,Inverse Reinforcement Learning, Energy-Based Models }

  \vskip 0.3in
]

\printAffiliationsAndNotice{}

\begin{abstract}
This paper introduces \model, a framework that unifies generative action modeling with inverse reinforcement learning by parameterizing a scalar energy function whose gradient is the denoising field. We establish that under maximum-entropy optimality, the score function learned via denoising score matching recovers the gradient of the expert's soft Q-function, enabling reward extraction without adversarial training. Formally, we prove that constraining the learned field to be conservative reduces hypothesis complexity and tightens out-of-distribution generalization bounds. We further characterize the identifiability of recovered rewards and bound how score estimation errors propagate to action preferences. Empirically, \model achieves state-of-the-art imitation performance on various manipulation tasks while providing an effective reward signal for downstream reinforcement learning that outperforms both adversarial IRL methods and likelihood-based alternatives. 
These results show that the structural constraints required for valid reward extraction simultaneously serve as beneficial inductive biases for policy generalization. The code is available at \url{https://github.com/sotaagi/EnergyFlow}.

\end{abstract}

\section{Introduction}
\begin{figure}[t]
\centering
\includegraphics[width=\linewidth]{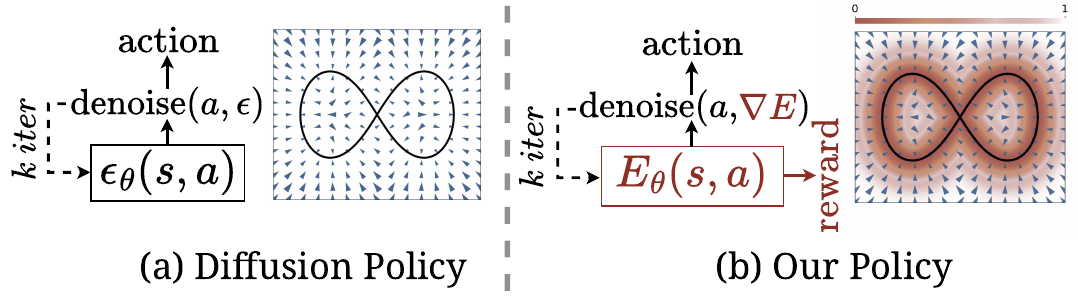}
\caption{\textbf{Comparison between Diffusion Policy and \model.} (a) Conventional diffusion policies predict noise $\epsilon_\theta(\boldsymbol{s},\boldsymbol{a})$ for iterative denoising but lack an explicit energy representation. (b) \model parameterizes an energy function $E_\theta(\boldsymbol{s},\boldsymbol{a})$ and performs denoising via its gradient $\nabla E_\theta$, enabling action generation as well as outputting reward signals. }
\vspace{-10pt}
\label{fig:teaser}
\end{figure}
Diffusion-based policies~\citep{dp, flowpolicy, mdt} have become a promising paradigm for embodied agents to learn manipulation skills from expert demonstrations. These methods learn to generate actions by iteratively denoising corrupted samples conditioned on the current state. Due to their capacity to model complex, multi-modal distributions, diffusion policies are particularly well-suited for capturing diverse expert behaviors~\cite{dp}. 

Despite this expressiveness, diffusion policies are typically trained under the behavior cloning~(BC) objective~\citep{bc}. They imitate trajectories without explicitly modeling why an action is desirable, \textit{i.e.}, the underlying intent or task preference that makes some behaviors succeed~\citep{intent1}. In practice, this can limit robustness and extrapolation. When test-time situations deviate from the demonstration distribution, matching action likelihood alone may not provide a reliable signal for action selection~\citep{intent2}.

A natural way to model intent is through reward-based Reinforcement Learning~(RL). For embodied agents, reward-driven behavior has been widely regarded as important in terms of governing complex cognitive abilities such as perception, imitation, and learning~\cite{reward}. This has motivated combining diffusion policies with reinforcement learning, aiming to improve adaptation beyond pure BC~\cite{ood3, ood4}. However, applying RL in real robotic settings remains challenging, in large part due to the need for careful reward design and tuning~\citep{rl1}. While inverse reinforcement learning~(IRL) methods~\citep{bay, maxentirl} can learn rewards from demonstrations, they often bring substantial computational overhead and may suffer from training instabilities~\citep{mcmcissue1, cdissue}.

We propose to exploit the reward signal that is already implicit in diffusion-based imitation. Motivated by connections between diffusion models and energy-based modeling~\cite{energyflow1, energyflow2}, we parameterize a scalar energy function over observation--action pairs and train it through a denoising score matching process. The resulting energy landscape both (i) induces a generative vector field for action sampling via its gradient and (ii) provides a reward signal aligned with the Boltzmann form as in maximum-entropy IRL. Figure~\ref{fig:teaser} compares standard diffusion policies, which learn a denoising vector field, with our approach, which also learns the underlying energy function.

Our contributions are as follows:
\begin{itemize}[leftmargin=*]

\item We propose \model, which parameterizes a scalar energy function $E_\theta(\boldsymbol{o},\boldsymbol{a})$ and derives the generative vector field from its action-gradient $\nabla_{\boldsymbol{a}} E_\theta(\boldsymbol{o},\boldsymbol{a})$. This enforces integrability by construction and yields complete probability-flow ordinary differential equation~(ODE) derivations that connect training and sampling.

\item We prove that the integrability constraint acts as implicit regularization, reducing hypothesis complexity and tightening generalization bounds. We further bound how score matching error propagates to recovered action preferences when using the learned energy as a reward signal.

\item Through extensive empirical experiments, we show that (i) the learned energy provides an effective shaping signal for downstream RL, with gains attributable to the energy-based extraction method; and (ii) enforcing integrability improves out-of-distribution generalization relative to unconstrained flow policies.
\end{itemize}
\section{Preliminaries}
\label{sec:prelim}



\paragraph{Denoising Score Matching.} Score matching~\citep{score} aims to estimate the score function $\nabla_{\boldsymbol{x}} \log p(\boldsymbol{x})$ of a data distribution. Denoising score matching~\cite{score2} provides a tractable objective by perturbing data with noise and learning to denoise the corrupted samples. Formally, given a noise-perturbation kernel $q_\sigma(\tilde{\boldsymbol{x}}|\boldsymbol{x}_0) = \mathcal{N}(\tilde{\boldsymbol{x}}; \boldsymbol{x}_0, \sigma^2\boldsymbol{I})$, the denoising score matching objective is:
\begin{equation}
    \mathbb{E}_{q_\sigma(\tilde{\boldsymbol{x}}|\boldsymbol{x}_0)p(\boldsymbol{x}_0)}\left[\|\mathcal{S}_\theta(\tilde{\boldsymbol{x}}, \sigma) - \nabla_{\tilde{\boldsymbol{x}}} \log q_\sigma(\tilde{\boldsymbol{x}}|\boldsymbol{x}_0)\|^2\right],
\end{equation}
which is equivalent to explicit score matching up to a constant~\cite{score2}. Since $\nabla_{\tilde{\boldsymbol{x}}} \log q_\sigma(\tilde{\boldsymbol{x}}|\boldsymbol{x}_0) = -(\tilde{\boldsymbol{x}} - \boldsymbol{x}_0)/\sigma^2 = -\boldsymbol{\varepsilon}/\sigma$, the objective reduces to predicting the scaled noise direction.

\paragraph{Score-Based Generative Models.} Score-based generative models~\citep{song2021score} extend denoising score matching across noise scales. The forward process adds noise according to a schedule $\sigma(t)$ for $t \in [0, T]$:
\begin{equation}
    \boldsymbol{x}_t = \boldsymbol{x}_0 + \sigma(t)\boldsymbol{\varepsilon}, \quad \boldsymbol{\varepsilon} \sim \mathcal{N}(\boldsymbol{0}, \boldsymbol{I}),
\end{equation}
where $\sigma(t)$ is monotonically increasing with $\sigma(0) \approx 0$. A noise-conditional score network $\mathcal{S}_\theta(\boldsymbol{x}, t)$ is trained to approximate $\nabla_{\boldsymbol{x}} \log p_t(\boldsymbol{x})$ via the multi-scale objective:
\begin{equation}
    \mathcal{L}(\theta) = \mathbb{E}_{t \sim \mathcal{U}[0,T], \boldsymbol{x}_0, \boldsymbol{\varepsilon}}\left[\lambda(t)\left\|\mathcal{S}_\theta(\boldsymbol{x}_t, t) + \frac{\boldsymbol{\varepsilon}}{\sigma(t)}\right\|^2\right],
    \label{eq:score_objective}
\end{equation}
where $\lambda(t) = \sigma^2(t)$ ensures uniform contribution across noise levels. Sampling proceeds by integrating the probability-flow ODE from $t = T$ to $t \approx 0$:
\begin{equation}
    \frac{d\boldsymbol{x}}{dt} = -\frac{1}{2}\frac{d[\sigma^2(t)]}{dt} \mathcal{S}_\theta(\boldsymbol{x}, t).
    \label{eq:prob_flow_ode}
\end{equation}

\paragraph{Diffusion-Based Policies.} Diffusion-based policies~\citep{dp, flowpolicy} represent the policy $\pi_\theta(\boldsymbol{a}|\boldsymbol{s})$ as a conditional score-based model. The model learns a noise-conditional score network $\mathcal{S}_\theta(\boldsymbol{a}_t, \boldsymbol{s}, t)$ that approximates $\nabla_{\boldsymbol{a}_t} \log p_t(\boldsymbol{a}_t|\boldsymbol{s})$, trained by minimizing the noise prediction error:
\begin{equation}
\setlength{\abovedisplayskip}{2pt}
    \mathcal{L}_{\text{BC}}(\theta) = \mathbb{E}_{t, \boldsymbol{\varepsilon}}\left[\lambda(t)\left\|\mathcal{S}_\theta(\boldsymbol{a}_t, \boldsymbol{s}, t) + \frac{\boldsymbol{\varepsilon}}{\sigma(t)}\right\|^2\right],
\setlength{\abovedisplayskip}{2pt}
\end{equation}
where $\boldsymbol{a}_t = \boldsymbol{a}_0 + \sigma(t)\boldsymbol{\varepsilon}$. At inference, actions are generated by sampling $\boldsymbol{a}_T \sim \mathcal{N}(\boldsymbol{0}, \sigma^2(T)\boldsymbol{I})$ and integrating the probability-flow ODE Eq.~\eqref{eq:prob_flow_ode} conditioned on $\boldsymbol{s}$.

\section{Theoretical Analysis}
\label{sec:theory}

Our goal is to unify generative score matching and inverse reinforcement learning (IRL). In this section, we establish that the score function learned by diffusion models is not merely a sampling mechanism, but an implicit representation of the expert's reward structure.

\subsection{Equivalence Between Scores and Reward Gradients}
\label{sec:score_irl}

Standard diffusion models estimate the score function $\nabla_{\boldsymbol{a}} \log p_t(\boldsymbol{a}|\boldsymbol{s})$ to generate data. We first demonstrate that for an optimal embodied agent, this score function already contains the underlying reward function gradients.

\begin{assumption}[Maximum Entropy Optimality]
\label{ass:maxent}
The expert policy $\pi_E(\boldsymbol{a}|\boldsymbol{s})$ is optimal with respect to the soft Q-function $Q^*(\boldsymbol{s}, \boldsymbol{a})$ under the Maximum Entropy principle~\cite{maxentirl}. The policy takes the form of a Boltzmann distribution:
\begin{equation}
\begin{aligned}
    \pi_E(\boldsymbol{a}|\boldsymbol{s}) = \frac{1}{Z(\boldsymbol{s})} \exp\left(\frac{Q^*(\boldsymbol{s}, \boldsymbol{a})}{\alpha}\right), \\ 
    \text{and} \ Z(\boldsymbol{s}) = \int \exp\left(\frac{Q^*(\boldsymbol{s}, \boldsymbol{a})}{\alpha}\right) d\boldsymbol{a},
\end{aligned}
\label{eq:maxent_policy}
\end{equation}
where $\alpha$ is the temperature parameter and $Q^*(\boldsymbol{s}, \boldsymbol{a})$ is the optimal soft action-value function incorporating both immediate rewards and future discounted returns.
\end{assumption}

\begin{remark}[Scope of the Assumption]
\label{rem:scope}
In the sequential MDP setting, the partition function satisfies $\log Z(\boldsymbol{s}) = V^*(\boldsymbol{s})/\alpha$, where $V^*$ is the optimal soft value function. Thus $\log \pi_E(\boldsymbol{a}|\boldsymbol{s}) = (Q^*(\boldsymbol{s},\boldsymbol{a}) - V^*(\boldsymbol{s}))/\alpha = A^{\text{soft}}(\boldsymbol{s},\boldsymbol{a})/\alpha$, where $A^{\text{soft}}$ is the soft advantage. Our analysis recovers the soft advantage (or equivalently, the soft Q-function up to state-dependent terms) from demonstrations.
\end{remark}

Under this assumption, the relationship between the data distribution and the soft Q-function is linear in log-space. By taking the gradient with respect to the action $\boldsymbol{a}$, we eliminate the intractable partition function $Z(\boldsymbol{s})$, establishing a direct link between the score and the Q-function gradient.

\begin{theorem}[Score-Reward Equivalence]
\label{thm:main_equivalence}
Let $\mathcal{S}^*(\boldsymbol{a}, \boldsymbol{s}) \coloneqq \nabla_{\boldsymbol{a}} \log \pi_E(\boldsymbol{a}|\boldsymbol{s})$ be the true score function of the expert policy. Under Assumption~\ref{ass:maxent}, the gradient of the expert's soft Q-function is proportional to the score:
\begin{equation}
\nabla_{\boldsymbol{a}} Q^*(\boldsymbol{s}, \boldsymbol{a}) = \alpha \cdot \mathcal{S}^*(\boldsymbol{a}, \boldsymbol{s}).
\label{eq:grad_equivalence}
\end{equation}
Consequently, if a parameterized energy function $E_\phi(\boldsymbol{a}, \boldsymbol{s})$ is trained such that $-\nabla_{\boldsymbol{a}} E_\phi \approx \mathcal{S}^*$, then $E_\phi$ recovers the soft Q-function up to a state-dependent constant:
\begin{equation}
\setlength{\abovedisplayskip}{2pt}
E_\phi(\boldsymbol{a}, \boldsymbol{s}) = -\frac{Q^*(\boldsymbol{s}, \boldsymbol{a})}{\alpha} + c(\boldsymbol{s}).
\label{eq:energy_reward_relation}
\setlength{\belowdisplayskip}{2pt}
\end{equation}
\end{theorem}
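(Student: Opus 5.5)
The plan is to take logarithms in Assumption~\ref{ass:maxent} and differentiate with respect to the action. For every state $\boldsymbol{s}$ and every action $\boldsymbol{a}$ in the support, \eqref{eq:maxent_policy} gives
\begin{equation*}
\log \pi_E(\boldsymbol{a}|\boldsymbol{s}) = \frac{Q^*(\boldsymbol{s},\boldsymbol{a})}{\alpha} - \log Z(\boldsymbol{s}).
\end{equation*}
Since $Z(\boldsymbol{s})$ is an integral over all actions, it depends on $\boldsymbol{s}$ only, so $\nabla_{\boldsymbol{a}} \log Z(\boldsymbol{s}) = 0$. Applying $\nabla_{\boldsymbol{a}}$ to both sides then gives $\mathcal{S}^*(\boldsymbol{a},\boldsymbol{s}) = \nabla_{\boldsymbol{a}} Q^*(\boldsymbol{s},\boldsymbol{a})/\alpha$. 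Multiplying by $\alpha$ yields \eqref{eq:grad_equivalence}. For this step I will state explicitly two standing regularity conditions: $Q^*(\boldsymbol{s},\cdot)$ is differentiable, and $Z(\boldsymbol{s})<\infty$, so that the Boltzmann density is well defined and strictly positive on the action space.

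For the second claim, I will read ``$\approx$'' as exact equality $-\nabla_{\boldsymbol{a}} E_\phi = \mathcal{S}^*$; the approximate version is deferred to the later error-propagation results. Combining this with the first part gives $\nabla_{\boldsymbol{a}}\bigl(E_\phi(\boldsymbol{a},\boldsymbol{s}) + Q^*(\boldsymbol{s},\boldsymbol{a})/\alpha\bigr) = 0$ for each fixed $\boldsymbol{s}$. Define $g_{\boldsymbol{s}}(\boldsymbol{a}) \coloneqq E_\phi(\boldsymbol{a},\boldsymbol{s}) + Q^*(\boldsymbol{s},\boldsymbol{a})/\alpha$. Take any two actions $\boldsymbol{a}_0,\boldsymbol{a}_1$. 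By the fundamental theorem of calculus along the segment $\boldsymbol{a}_0 + \tau(\boldsymbol{a}_1-\boldsymbol{a}_0)$, $\tau\in[0,1]$, we get $g_{\boldsymbol{s}}(\boldsymbol{a}_1) - g_{\boldsymbol{s}}(\boldsymbol{a}_0) = \int_0^1 \nabla g_{\boldsymbol{s}} \cdot (\boldsymbol{a}_1-\boldsymbol{a}_0)\,d\tau = 0$. Hence $g_{\boldsymbol{s}}$ takes a single value, which I define as $c(\boldsymbol{s})$. Rearranging gives \eqref{eq:energy_reward_relation}.

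The main obstacle is justifying that the gradient identity determines $E_\phi$ up to a function of $\boldsymbol{s}$ alone. This requires the action domain to be connected, and the identity to hold on all of it rather than only on the data support. I will first assume the action space is $\mathbb{R}^d$, or any connected open set such as a box. On $\mathbb{R}^d$ the Boltzmann density has full support, which makes the segment argument valid everywhere. If the domain were disconnected, $c$ could differ across components, so I will note connectedness as a hypothesis. I will also remark that, by Remark~\ref{rem:scope}, $c(\boldsymbol{s})$ can be identified with $V^*(\boldsymbol{s})/\alpha$ up to an additive constant, so $-E_\phi$ equals the soft advantage scaled by $1/\alpha$ up to a state-dependent shift.
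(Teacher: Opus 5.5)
Your proof is correct and follows the paper's argument: take the logarithm of the Boltzmann form, use $\nabla_{\boldsymbol{a}}\log Z(\boldsymbol{s})=0$, then integrate the gradient identity along a path to obtain the state-dependent constant. Your explicit differentiability, $Z(\boldsymbol{s})<\infty$, and connectedness hypotheses sharpen the paper's ``along any path'' (use polygonal rather than straight segments if the domain is connected but not convex). One caution on your closing remark: the hypothesis $-\nabla_{\boldsymbol{a}}E_\phi=\mathcal{S}^*$ says nothing about how $E_\phi$ varies in $\boldsymbol{s}$, so $c(\boldsymbol{s})$ is an arbitrary function of the state, and it equals $V^*(\boldsymbol{s})/\alpha$ only if you additionally impose $\int e^{-E_\phi(\boldsymbol{a},\boldsymbol{s})}\,d\boldsymbol{a}=1$ for each $\boldsymbol{s}$.
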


\begin{proof}
Taking the logarithm of Eq.~\eqref{eq:maxent_policy} yields $\log \pi_E(\boldsymbol{a}|\boldsymbol{s}) = \frac{1}{\alpha}Q^*(\boldsymbol{s}, \boldsymbol{a}) - \log Z(\boldsymbol{s})$. Since $Z(\boldsymbol{s})$ depends only on state $\boldsymbol{s}$, $\nabla_{\boldsymbol{a}} \log Z(\boldsymbol{s}) = 0$. Differentiating both sides with respect to $\boldsymbol{a}$ immediately yields Eq.~\eqref{eq:grad_equivalence}. Integrating both sides with respect to $\boldsymbol{a}$ along any path yields Eq.~\eqref{eq:energy_reward_relation}, where $c(\boldsymbol{s})$ is the integration constant.
\end{proof}

\begin{corollary}[Connection to Soft Advantage]
\label{cor:soft_advantage}
Under Assumption~\ref{ass:maxent}, the learned energy satisfies:
\begin{equation}
E_\phi(\boldsymbol{a}, \boldsymbol{s}) = -\frac{A^{\text{soft}}(\boldsymbol{s},\boldsymbol{a})}{\alpha} + c'(\boldsymbol{s}),
\end{equation}
where $A^{\text{soft}}(\boldsymbol{s},\boldsymbol{a}) = Q^*(\boldsymbol{s},\boldsymbol{a}) - V^*(\boldsymbol{s})$ is the soft advantage and $c'(\boldsymbol{s}) = c(\boldsymbol{s}) + V^*(\boldsymbol{s})/\alpha$.
\end{corollary}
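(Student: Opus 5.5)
The corollary follows by substituting the soft-value identity from Remark~\ref{rem:scope} into the conclusion of Theorem~\ref{thm:main_equivalence}. I will take Eq.~\eqref{eq:energy_reward_relation} as given: under Assumption~\ref{ass:maxent}, an energy with $-\nabla_{\boldsymbol{a}} E_\phi = \mathcal{S}^*$ satisfies $E_\phi(\boldsymbol{a},\boldsymbol{s}) = -Q^*(\boldsymbol{s},\boldsymbol{a})/\alpha + c(\boldsymbol{s})$. As in the theorem, I treat the approximation $-\nabla_{\boldsymbol{a}} E_\phi \approx \mathcal{S}^*$ as an equality, since the corollary is stated at the same level of idealization.

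First I will justify the identity $\log Z(\boldsymbol{s}) = V^*(\boldsymbol{s})/\alpha$. In the maximum-entropy MDP, the optimal soft value is defined by the soft Bellman backup $V^*(\boldsymbol{s}) = \alpha \log \int \exp(Q^*(\boldsymbol{s},\boldsymbol{a})/\alpha)\,d\boldsymbol{a}$. Comparing this with the definition of $Z(\boldsymbol{s})$ in Eq.~\eqref{eq:maxent_policy} gives the identity directly, which is the content of Remark~\ref{rem:scope}.

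Next I will rewrite $Q^*$ as $Q^* = A^{\text{soft}} + V^*$, using the definition of $A^{\text{soft}}$. Substituting into Eq.~\eqref{eq:energy_reward_relation} gives
\begin{equation*}
E_\phi(\boldsymbol{a},\boldsymbol{s}) = -\frac{A^{\text{soft}}(\boldsymbol{s},\boldsymbol{a})}{\alpha} - \frac{V^*(\boldsymbol{s})}{\alpha} + c(\boldsymbol{s}).
\end{equation*}
The last two terms depend only on $\boldsymbol{s}$, so they can be collected into a single state-dependent constant.

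The only step needing care is the sign of that constant. The direct substitution produces $c(\boldsymbol{s}) - V^*(\boldsymbol{s})/\alpha$, whereas the statement writes $c(\boldsymbol{s}) + V^*(\boldsymbol{s})/\alpha$. I will therefore define $c'(\boldsymbol{s}) \coloneqq c(\boldsymbol{s}) - V^*(\boldsymbol{s})/\alpha$, so that the displayed form of the corollary holds with an arbitrary state-dependent term $c'(\boldsymbol{s})$. I will note that the formula given for $c'$ in the statement has a sign slip.

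This does not affect the substantive claim, because $c$ is itself an arbitrary integration constant and any state-only term can absorb it. I will also note a consistency check: when $c(\boldsymbol{s}) = \log Z(\boldsymbol{s})$, we get $E_\phi = -\log \pi_E$ and $c' = 0$, which matches Remark~\ref{rem:scope}.
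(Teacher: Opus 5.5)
Your proposal is correct and matches the argument the paper leaves implicit, since the paper states the corollary without proof: substitute $Q^* = A^{\text{soft}} + V^*$ into Eq.~\eqref{eq:energy_reward_relation}. This gives $c'(\boldsymbol{s}) = c(\boldsymbol{s}) - V^*(\boldsymbol{s})/\alpha$, so you are right that the statement's $+V^*(\boldsymbol{s})/\alpha$ is a sign slip, harmless only because $c$ is an arbitrary integration constant. Your justification of $\log Z(\boldsymbol{s}) = V^*(\boldsymbol{s})/\alpha$ is not needed for the corollary itself, only for your consistency check that $c' = 0$ when $E_\phi = -\log \pi_E$.
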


This theorem suggests that score matching can substitute for the unstable min-max optimization typical of adversarial IRL. However, Eq.~\eqref{eq:grad_equivalence} only holds if the learned score field is actually the gradient of a \textit{scalar function}. This leads to a need for proper structural constraints.

\subsection{Enforcing Conservative Field}
\label{sec:integrability}

While Theorem~\ref{thm:main_equivalence} establishes that a reward gradient is a score, the converse is not automatically true for approximated functions. A generic neural network outputting a vector field may not be the gradient of any scalar field.

\begin{definition}[Conservative Vector Field]
A vector field $V: \mathbb{R}^d \to \mathbb{R}^d$ is \emph{conservative} (or integrable) if there exists a scalar potential $\Psi$ such that $V = \nabla \Psi$. A necessary condition is that the Jacobian is symmetric ($\nabla \times V = 0$), implying path independence.
\end{definition}

If a learned score field $\mathcal{S}_\phi$ is not conservative, the implied ``reward'' becomes ill-defined. Specifically, a non-conservative field induces \emph{cyclic preferences} (e.g., $\boldsymbol{a}_1 \succ \boldsymbol{a}_2 \succ \boldsymbol{a}_3 \succ \boldsymbol{a}_1$), violating the transitivity axiom of rational decision-making~\cite{cons}. To prevent this, we must strictly restrict our hypothesis space to conservative fields. This is achieved by parameterizing a scalar energy network $E_\phi$ and defining the score as $\mathcal{S}_\phi = -\nabla_{\boldsymbol{a}} E_\phi$.

Beyond ensuring theoretical validity, this restriction acts as a powerful inductive bias for generalization.

\begin{theorem}[Complexity Reduction via Conservative Constraints]
\label{thm:complexity_reduction}
\textit{Let $\phi: \mathbb{R}^{\text{in}} \to \mathbb{R}^k$ be a neural feature representation with bounded feature norm $\sup_{\boldsymbol{x}} \|\phi(\boldsymbol{x})\|_2 \leq B$, bounded Jacobian Frobenius norm $\sup_{\boldsymbol{x}} \|J_\phi(\boldsymbol{x})\|_F \leq L$, and bounded weight matrix norm $\sup \|\boldsymbol{W}\| \leq \Lambda$ for the linear map.
Let $\mathcal{F}_{\text{unc}}$ be the class of arbitrary linear vector fields over $\phi$, and $\mathcal{F}_{\text{cons}}$ be the class of conservative vector fields (gradients of potentials over $\phi$).
The Empirical Rademacher complexity of the conservative class is strictly tighter with respect to the output dimension $d$:}
\begin{align}
    \hat{\mathfrak{R}}_S(\mathcal{F}_{\text{unc}}) \leq \frac{\Lambda B \sqrt{d}}{\sqrt{n}}, \quad
    \hat{\mathfrak{R}}_S(\mathcal{F}_{\text{cons}}) \leq \frac{\Lambda L}{\sqrt{n}}.
\end{align}
\textit{For high-dimensional action spaces where $d$ is large, provided the representation is smooth ($L \ll B\sqrt{d}$), we have $\hat{\mathfrak{R}}_S(\mathcal{F}_{\text{cons}}) \ll \hat{\mathfrak{R}}_S(\mathcal{F}_{\text{unc}})$.}
\end{theorem}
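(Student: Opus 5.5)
I will prove both bounds by the standard Cauchy--Schwarz/Jensen argument for linear function classes, treating vector-valued outputs through the Frobenius structure. The plan is to fix the two hypothesis classes concretely. Take $\mathcal{F}_{\text{unc}} = \{\boldsymbol{x}\mapsto \boldsymbol{W}\phi(\boldsymbol{x}) : \boldsymbol{W}\in\mathbb{R}^{d\times k}, \|\boldsymbol{W}\|_F\le\Lambda\}$. Take $\mathcal{F}_{\text{cons}} = \{\boldsymbol{x}\mapsto \nabla_{\boldsymbol{x}}(\boldsymbol{w}^\top\phi(\boldsymbol{x})) = J_\phi(\boldsymbol{x})^\top\boldsymbol{w} : \|\boldsymbol{w}\|_2\le\Lambda\}$, where the potential is a linear readout of the same features, so the gradient is taken with respect to the $d$ action coordinates. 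For vector-valued classes I use the Rademacher complexity with one independent sign per sample and per output coordinate, $\hat{\mathfrak{R}}_S(\mathcal{F}) = \frac1n\mathbb{E}_{\boldsymbol{\sigma}}\sup_{f}\sum_{i}\langle\boldsymbol{\sigma}_i, f(\boldsymbol{x}_i)\rangle$ with $\boldsymbol{\sigma}_i\in\{\pm1\}^d$. This is the form used in vector-contraction inequalities, so it will feed the generalization bound later.

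\textbf{Unconstrained class.} I write $\sum_i\langle\boldsymbol{\sigma}_i,\boldsymbol{W}\phi(\boldsymbol{x}_i)\rangle = \langle \boldsymbol{W}, \sum_i \boldsymbol{\sigma}_i\phi(\boldsymbol{x}_i)^\top\rangle_F$. Cauchy--Schwarz in Frobenius norm bounds the supremum by $\Lambda\|\sum_i\boldsymbol{\sigma}_i\phi(\boldsymbol{x}_i)^\top\|_F$. Jensen then gives $\mathbb{E}\|\cdot\|_F\le(\mathbb{E}\|\cdot\|_F^2)^{1/2}$. Independence of the signs kills the cross terms, leaving $\sum_i\|\boldsymbol{\sigma}_i\|_2^2\|\phi(\boldsymbol{x}_i)\|_2^2 = d\sum_i\|\phi(\boldsymbol{x}_i)\|^2\le ndB^2$. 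Dividing by $n$ yields $\Lambda B\sqrt{d}/\sqrt n$. The $\sqrt d$ arises exactly from $\|\boldsymbol{\sigma}_i\|_2^2=d$: the unconstrained class has $d$ independent output heads.

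\textbf{Conservative class.} I write $\sum_i\langle\boldsymbol{\sigma}_i, J_\phi(\boldsymbol{x}_i)^\top\boldsymbol{w}\rangle = \langle\boldsymbol{w},\sum_i J_\phi(\boldsymbol{x}_i)\boldsymbol{\sigma}_i\rangle$. Cauchy--Schwarz and Jensen again give a bound of $\Lambda(\mathbb{E}\|\sum_i J_\phi(\boldsymbol{x}_i)\boldsymbol{\sigma}_i\|_2^2)^{1/2}$. The cross terms vanish, and for each $i$ we have $\mathbb{E}\|J_\phi(\boldsymbol{x}_i)\boldsymbol{\sigma}_i\|_2^2 = \mathrm{tr}(J_\phi^\top J_\phi) = \|J_\phi(\boldsymbol{x}_i)\|_F^2\le L^2$, because $\mathbb{E}[\boldsymbol{\sigma}_i\boldsymbol{\sigma}_i^\top]=\boldsymbol{I}$. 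This gives $\Lambda L/\sqrt n$ with no explicit $d$. The dimension is absorbed into the Frobenius norm of the Jacobian, since the $d$ output coordinates share a single $k$-dimensional weight vector. The comparison statement then follows immediately under the stated condition $L\ll B\sqrt d$.

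\textbf{Main obstacle.} The main difficulty is not the calculation but making the two classes comparable and the statement honest. First, the norm on $\boldsymbol{W}$ must be Frobenius, or at least dominate it, for the first bound to hold as stated. Second, the per-coordinate Rademacher definition must be used for both classes. Third, "strictly tighter" should be read as a comparison of the two upper bounds, valid in the regime $L < B\sqrt d$, rather than as an inequality between the true complexities. I will state these conventions explicitly at the start of the proof. I will also note that if $\phi$ depends on both $\boldsymbol{s}$ and $\boldsymbol{a}$, the Jacobian is taken only over the $d$ action inputs, which keeps $L$ the relevant constant.
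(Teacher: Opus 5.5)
Your proposal is correct and follows essentially the same route as the paper. It uses the same two linear classes ($\boldsymbol{W}\phi$ with $\|\boldsymbol{W}\|_F\le\Lambda$, and $J_\phi^\top\boldsymbol{w}$ with $\|\boldsymbol{w}\|_2\le\Lambda$), the same coordinate-wise Rademacher vectors, and the same Cauchy--Schwarz/Jensen argument, with $\mathbb{E}[\boldsymbol{\sigma}^\top A\boldsymbol{\sigma}]=\mathrm{Tr}(A)$ yielding $\|J_\phi\|_F^2\le L^2$. Your added caveats are sound refinements that the paper leaves implicit: the bound needs the Frobenius norm on $\boldsymbol{W}$, the Jacobian is taken over the $d$ action coordinates, and ``strictly tighter'' compares upper bounds rather than the true complexities.
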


\textit{$\square$ Proof in Appendix~\ref{appendix:proof_complexity}.}

\begin{remark}[Applicability to Deep Architectures]
\label{rem:deep_networks}
While Theorem~\ref{thm:complexity_reduction} formally bounds the final linear readout, its assumptions are satisfied by deep neural networks under standard Lipschitz constraints. For a deep network $\phi$, the Jacobian norm $L$ is bounded by the product of the spectral norms of individual weight matrices~\cite{bound}. In practice, training techniques such as weight decay and spectral normalization strictly control these norms to prevent exploding gradients, ensuring finite $\Lambda$ and $L$. 
\end{remark}

\subsection{OOD Generalization}
\label{sec:ood_theory}
By enforcing a conservative field, we also impose a global structural constraint: the learned field must remain the gradient of a scalar potential even in unseen regions. This forces the model to extrapolate the shape of the energy landscape rather than fitting arbitrary vector directions, effectively coupling the prediction errors across dimensions.

\begin{lemma}[OOD Generalization]
\label{thm:generalization}
\textit{Let $\mathcal{D}_S$ be the source training distribution and $\mathcal{D}_T$ be a target (OOD) distribution. Let $h^* \in \mathcal{F}_{\text{cons}}$ be the ground truth conservative field. Assume that all hypotheses in $\mathcal{F}_{\text{cons}}$ and $\mathcal{F}_{\text{unc}}$ are uniformly bounded by $M > 0$ (i.e., $\sup_{\boldsymbol{x}} \|f(\boldsymbol{x})\|_2 \leq M$ for all $f$ in the hypothesis class). For any learned hypothesis $f$, let the risk be $\mathcal{R}_{\mathcal{D}}(f) = \mathbb{E}_{\boldsymbol{x} \sim \mathcal{D}}[\|f(\boldsymbol{x}) - h^*(\boldsymbol{x})\|^2]$.
The risk on the target domain for the conservative estimator satisfies, with probability at least $1-\delta$:}
\begin{equation}
\begin{aligned}
    \mathcal{R}_{\mathcal{D}_T}(\hat{f}_{\text{cons}}) &\leq \hat{\mathcal{R}}_S(\hat{f}_{\text{cons}}) \\
    &+ \frac{1}{2} d_{\mathcal{H}\Delta\mathcal{H}}(\mathcal{D}_S, \mathcal{D}_T) + \mathcal{O}\left(\frac{M\Lambda L}{\sqrt{n}}\right),
\end{aligned}
\end{equation}
\textit{whereas for the unconstrained estimator $\hat{f}_{\text{unc}}$, the complexity term scales with $\mathcal{O}(M\Lambda B \sqrt{d}/\sqrt{n})$. Here, $d_{\mathcal{H}\Delta\mathcal{H}}$ is the discrepancy distance between domains and $\hat{\mathcal{R}}_S$ denotes the empirical source risk.}
\end{lemma}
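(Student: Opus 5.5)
We follow the standard two-stage template of domain-adaptation bounds: first transfer the target risk to the source risk using the discrepancy distance, then control the gap between the true and empirical source risk by uniform convergence, with the complexity of the loss class supplied by Theorem~\ref{thm:complexity_reduction}.

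\textbf{Step 1 (domain transfer).} Define the loss class $\ell\circ\mathcal{F} = \{\boldsymbol{x}\mapsto \|f(\boldsymbol{x})-h^*(\boldsymbol{x})\|^2 : f\in\mathcal{F}\}$. We take $d_{\mathcal{H}\Delta\mathcal{H}}$ in the discrepancy sense of Mansour et al., normalized so that
\begin{equation*}
\sup_{f\in\mathcal{F}}\big|\mathcal{R}_{\mathcal{D}_T}(f)-\mathcal{R}_{\mathcal{D}_S}(f)\big| \le \tfrac12 d_{\mathcal{H}\Delta\mathcal{H}}(\mathcal{D}_S,\mathcal{D}_T).
\end{equation*}
Applied to $f=\hat f_{\text{cons}}$, this gives $\mathcal{R}_{\mathcal{D}_T}(\hat f)\le \mathcal{R}_{\mathcal{D}_S}(\hat f)+\tfrac12 d_{\mathcal{H}\Delta\mathcal{H}}$. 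Because $h^*\in\mathcal{F}_{\text{cons}}$ is realizable, there is no joint-optimal-risk term $\lambda^*$, which is why the statement contains none. This step is essentially definitional; the only care needed is to state the discrepancy over $\ell\circ\mathcal{F}$ with the constant $\tfrac12$.

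\textbf{Step 2 (source generalization).} The loss is bounded by $(2M)^2=4M^2$. The standard Rademacher bound then gives, with probability at least $1-\delta$,
\begin{equation*}
\mathcal{R}_{\mathcal{D}_S}(\hat f)\le\hat{\mathcal{R}}_S(\hat f)+2\hat{\mathfrak{R}}_S(\ell\circ\mathcal{F})+12M^2\sqrt{\log(2/\delta)/(2n)}.
\end{equation*}

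\textbf{Step 3 (main obstacle: contraction).} We must bound $\hat{\mathfrak{R}}_S(\ell\circ\mathcal{F})$ for a vector-valued class. On the ball of radius $M$, the map $u\mapsto\|u-h^*(\boldsymbol{x})\|^2$ is $4M$-Lipschitz in $u$, since $\|u-h^*\|\le 2M$. We would apply the vector-contraction inequality of Maurer,
\begin{equation*}
\mathbb{E}\sup_f\sum_i\sigma_i\psi_i(f(\boldsymbol{x}_i))\le\sqrt2\, L_\psi\,\mathbb{E}\sup_f\sum_{i,j}\sigma_{ij}f_j(\boldsymbol{x}_i),
\end{equation*}
to get $\hat{\mathfrak{R}}_S(\ell\circ\mathcal{F})\le 4\sqrt2 M\,\hat{\mathfrak{R}}_S(\mathcal{F})$. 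Here $\hat{\mathfrak{R}}_S(\mathcal{F})$ is the coordinatewise Rademacher complexity, the quantity bounded in Theorem~\ref{thm:complexity_reduction}.

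The delicate point is that the contraction must not introduce an extra $\sqrt d$ factor for the conservative class. For $\mathcal{F}_{\text{cons}}$ the field is $\boldsymbol{x}\mapsto J_\phi(\boldsymbol{x})^\top\boldsymbol{W}$, so the inner supremum over $\boldsymbol{W}$ is $\Lambda\|\sum_i J_\phi(\boldsymbol{x}_i)^\top\boldsymbol{\sigma}_i\|_F$. We would bound this directly via Jensen by $\Lambda\sqrt{\sum_i\|J_\phi(\boldsymbol{x}_i)\|_F^2}\le\Lambda L\sqrt n$, so the Frobenius norm absorbs the output dimension. This reproduces Theorem~\ref{thm:complexity_reduction}'s $\Lambda L/\sqrt n$ inside the contracted complexity. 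For $\mathcal{F}_{\text{unc}}$ the same computation gives $\Lambda B\sqrt{d}/\sqrt n$.

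\textbf{Step 4 (combine).} Chaining the three steps yields
\begin{equation*}
\mathcal{R}_{\mathcal{D}_T}(\hat f_{\text{cons}})\le\hat{\mathcal{R}}_S(\hat f_{\text{cons}})+\tfrac12 d_{\mathcal{H}\Delta\mathcal{H}}+\mathcal{O}(M\Lambda L/\sqrt n).
\end{equation*}
The $\delta$-dependent term $\mathcal{O}(M^2\sqrt{\log(1/\delta)/n})$ is absorbed into the $\mathcal{O}$, treating $M$ and $\log(1/\delta)$ as constants relative to the complexity factors. The unconstrained case follows identically with $\Lambda B\sqrt d$ replacing $\Lambda L$.
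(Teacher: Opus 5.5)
Your argument is correct and has the same skeleton as the paper's proof: domain transfer, Rademacher uniform convergence on the source, contraction of the squared loss, then Theorem~\ref{thm:complexity_reduction}. You instantiate two of these stages with different lemmas, and yours are the ones that fit the setting.

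For the transfer step, the paper uses the classical $\mathcal{H}\Delta\mathcal{H}$ inequality of Ben-David et al., with an ideal-joint-error term $\lambda$ that it declares negligible. That result is stated for binary classifiers under the $0$--$1$ loss. Your discrepancy (in the sense of Mansour et al.) over the hypothesis class, together with realizability $h^*\in\mathcal{F}_{\text{cons}}$, gives the squared-loss transfer inequality by definition and shows exactly why no $\lambda$ appears.

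For the contraction step, the paper applies scalar Talagrand contraction with constant $2M$, treating the vector-valued output as a scalar. You use Maurer's vector contraction with the correct constant $4M$, at the cost of only a factor $\sqrt{2}$. Its right-hand side is exactly the vector-Rademacher complexity $\frac{1}{n}\mathbb{E}\sup_f\sum_i\langle\boldsymbol{\sigma}_i,f(\boldsymbol{x}_i)\rangle$ that Theorem~\ref{thm:complexity_reduction} bounds, so you can cite that theorem directly. Your re-derivation is therefore redundant, and as written it should read $\Lambda\|\sum_i J_\phi(\boldsymbol{x}_i)\boldsymbol{\sigma}_i\|_2$ with $\boldsymbol{w}\in\mathbb{R}^k$, not $J_\phi^\top\boldsymbol{\sigma}_i$ with a matrix $\boldsymbol{W}$ and a Frobenius norm.

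The paper's version is shorter, but read literally neither of its cited lemmas applies to vector-valued squared-loss regression; your version closes both holes.

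The one informal step you share with the paper is absorbing the confidence term $12M^2\sqrt{\log(2/\delta)/(2n)}$. Instead of treating $M$ as a constant, note that $\|J_\phi(\boldsymbol{x})^\top\boldsymbol{w}\|_2\le\|J_\phi(\boldsymbol{x})\|_F\|\boldsymbol{w}\|_2\le\Lambda L$. So on $\mathcal{F}_{\text{cons}}$ you may replace $M$ by $\min(M,\Lambda L)$, the loss range is then at most $4M\Lambda L$, and the term is genuinely $\mathcal{O}(M\Lambda L\sqrt{\log(1/\delta)}/\sqrt{n})$.
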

\textit{$\square$ Proof in Appendix~\ref{appendix:proof_generalization}.}

Lemma~\ref{thm:generalization} implies that as the dimensionality $d$ of the action space increases, the upper bound on the OOD error for unconstrained fields grows with $\sqrt{d}$, while the bound for conservative fields remains controlled by the smoothness $L$.

\subsection{Identifiability and Within-State Reward Shaping}
\label{sec:identifiability}

Having established that we can recover a valid reward gradient $\nabla_{\boldsymbol{a}} Q^*$, we must determine if this uniquely identifies the Q-function. Integrating Eq.~\eqref{eq:grad_equivalence} with respect to $\boldsymbol{a}$ yields:
\begin{equation}
Q^*(\boldsymbol{s}, \boldsymbol{a}) = -\alpha E_\phi(\boldsymbol{a}, \boldsymbol{s}) + c(\boldsymbol{s}),
\label{eq:integration_ambiguity}
\end{equation}
where $c(\boldsymbol{s})$ is an unknown state-dependent integration constant. This represents a fundamental limit of learning from demonstrations: we observe which actions are preferred \textit{at} a state, but not how good the state is globally.

\begin{proposition}[Within-State Action Ranking]
\label{prop:identifiability}
The learned energy provides exact within-state action rankings:
\begin{enumerate}[leftmargin=*]
\item \textbf{Within-state ranking is exact.} For any fixed state $\boldsymbol{s}$, the action with lowest energy is the expert's most preferred action: $\arg\min_{\boldsymbol{a}} E_\phi(\boldsymbol{a}, \boldsymbol{s}) = \arg\max_{\boldsymbol{a}} Q^*(\boldsymbol{s}, \boldsymbol{a})$.
\item \textbf{Cross-state comparison is ambiguous.} The difference $E_\phi(\boldsymbol{a}, \boldsymbol{s}) - E_\phi(\boldsymbol{a}', \boldsymbol{s}')$ includes the unknown quantity $c(\boldsymbol{s}) - c(\boldsymbol{s}')$.
\end{enumerate}
\end{proposition}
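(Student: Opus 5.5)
The plan is to work directly from the integrated relation Eq.~\eqref{eq:integration_ambiguity}, which follows from Theorem~\ref{thm:main_equivalence}. We idealize the learned field as exact, i.e.\ $-\nabla_{\boldsymbol{a}} E_\phi = \mathcal{S}^*$ on the action space at each state. We assume that action space is path-connected, so the gradient determines the potential up to a constant. Then, for each fixed $\boldsymbol{s}$,
\[
E_\phi(\boldsymbol{a},\boldsymbol{s}) = -\tfrac{1}{\alpha}Q^*(\boldsymbol{s},\boldsymbol{a}) + c(\boldsymbol{s}).
\]
Path-connectedness matters here. On a disconnected action set, the "constant" could differ between components, and the within-state ranking claim would fail. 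I would state this as a standing regularity condition.

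\textbf{Part 1.} Fix $\boldsymbol{s}$. The map $q \mapsto -q/\alpha + c(\boldsymbol{s})$ is strictly decreasing in $q$, because $\alpha > 0$ and $c(\boldsymbol{s})$ does not depend on $\boldsymbol{a}$. Hence for any two actions,
\[
E_\phi(\boldsymbol{a},\boldsymbol{s}) \le E_\phi(\boldsymbol{a}',\boldsymbol{s}) \iff Q^*(\boldsymbol{s},\boldsymbol{a}) \ge Q^*(\boldsymbol{s},\boldsymbol{a}').
\]
So the entire within-state preference order is preserved, not just the minimizer. Taking extrema gives $\arg\min_{\boldsymbol{a}} E_\phi = \arg\max_{\boldsymbol{a}} Q^*$ as sets, whenever either side exists.

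By Assumption~\ref{ass:maxent}, $\pi_E(\boldsymbol{a}|\boldsymbol{s}) \propto \exp(Q^*/\alpha)$ is strictly increasing in $Q^*$. Therefore this action is also the mode of the expert policy, which justifies the phrase "expert's most preferred action". Via Corollary~\ref{cor:soft_advantage}, the same ordering holds for $A^{\text{soft}}$.

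\textbf{Part 2.} Subtract the integrated relation at two pairs $(\boldsymbol{a},\boldsymbol{s})$ and $(\boldsymbol{a}',\boldsymbol{s}')$:
\[
E_\phi(\boldsymbol{a},\boldsymbol{s}) - E_\phi(\boldsymbol{a}',\boldsymbol{s}') = -\tfrac{1}{\alpha}\bigl(Q^*(\boldsymbol{s},\boldsymbol{a}) - Q^*(\boldsymbol{s}',\boldsymbol{a}')\bigr) + c(\boldsymbol{s}) - c(\boldsymbol{s}').
\]
To show the ambiguity is genuine, we need non-identifiability: the data cannot pin down $c(\boldsymbol{s}) - c(\boldsymbol{s}')$. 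Take any function $g(\boldsymbol{s})$ and replace $E_\phi$ by $E_\phi + g(\boldsymbol{s})$. The action-gradient is unchanged, so the score and the denoising score matching loss are unchanged. The Boltzmann policy $\exp(-E)/\int \exp(-E)\,d\boldsymbol{a}$ is also unchanged, because $g$ cancels in normalization.

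Yet the cross-state difference shifts by $g(\boldsymbol{s}) - g(\boldsymbol{s}')$, which is arbitrary. So the sign of a cross-state comparison can be flipped without changing any observable quantity. Remark~\ref{rem:scope} also suggests a concrete instance: $Q^*$ and $A^{\text{soft}}$ generate the same policy but differ by $V^*(\boldsymbol{s})$.

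The only real obstacle is being honest about the idealization. Part 1 requires the exact-gradient hypothesis and a connected action domain. With approximate scores, exactness degrades to an error-dependent ranking guarantee, which is deferred to the paper's later error-propagation bound.
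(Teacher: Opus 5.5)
Your proposal is correct and follows the paper's proof. Part 1 uses the same cancellation of $c(\boldsymbol{s})$ under the strictly decreasing map $q \mapsto -q/\alpha + c(\boldsymbol{s})$, and Part 2 uses the same subtraction of the integrated relation at two state--action pairs, while your additions (path-connectedness of the action domain, set-valued $\arg\min$/$\arg\max$, and the invariance of the score, the DSM loss, and the Boltzmann policy under $E_\phi \mapsto E_\phi + g(\boldsymbol{s})$) make rigorous what the paper only asserts in words, namely that $c(\boldsymbol{s}) - c(\boldsymbol{s}')$ cannot be determined from demonstrations.
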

\textit{$\square$ Proof in Appendix~\ref{appendix:proof_identifiability}.}

\begin{remark}[State Ambiguity]
\label{rem:pbrs}
The recovered reward $\hat{r}(\boldsymbol{s},\boldsymbol{a}) = -\alpha E_\phi(\boldsymbol{a},\boldsymbol{s})$ differs from the true soft Q-function by a state-dependent offset $c(\boldsymbol{s})$. In the specific case where $c(\boldsymbol{s})$ takes the form required by potential-based reward shaping (PBRS)~\cite{shaping}, \textit{i.e.}, it can be expressed as a potential difference $\gamma\Phi(\boldsymbol{s}') - \Phi(\boldsymbol{s})$ over transitions, the optimal policy is provably preserved. In general, however, a state-only offset does \emph{not} satisfy the PBRS form and may alter the optimal policy in sequential settings. Nevertheless, for \emph{within-state action selection} (which is the primary use case for our shaping signal in downstream RL), the offset $c(\boldsymbol{s})$ is irrelevant since it cancels when comparing actions at the same state. Our centered shaping strategy (\S\ref{sec:reward_extraction}) explicitly removes this offset by subtracting a state-dependent baseline, ensuring the shaping signal reflects only the relative action preferences.
\end{remark}

\subsection{Robustness to Estimation Error}
\label{sec:error_propagation}

Since score matching is approximate, we bound the impact of score estimation error $\eta$ on the recovered preferences.

\begin{theorem}[Lipschitz Continuity of Preferences]
\label{thm:energy_error}
Assume the learned score satisfies $\|\mathcal{S}_\phi(\boldsymbol{a}, \boldsymbol{s}) - \mathcal{S}^*(\boldsymbol{a}, \boldsymbol{s})\|_2 \leq \eta$ uniformly. Let $\Delta E(\boldsymbol{a}, \boldsymbol{a}') = E(\boldsymbol{a}, \boldsymbol{s}) - E(\boldsymbol{a}', \boldsymbol{s})$ be the relative preference between two actions at the same state. Then:
\begin{equation}
\left| \Delta E_\phi(\boldsymbol{a}, \boldsymbol{a}') - \Delta E^*(\boldsymbol{a}, \boldsymbol{a}') \right| \leq \eta \cdot \|\boldsymbol{a} - \boldsymbol{a}'\|_2.
\end{equation}
\end{theorem}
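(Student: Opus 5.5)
The plan is to use the fundamental theorem of calculus along the straight segment joining $\boldsymbol{a}'$ to $\boldsymbol{a}$ at the fixed state $\boldsymbol{s}$. This is legitimate because both fields are conservative.

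By construction the learned energy has $\mathcal{S}_\phi = -\nabla_{\boldsymbol{a}} E_\phi$, as in \S\ref{sec:integrability}. By Theorem~\ref{thm:main_equivalence}, the true score is also a gradient, $\mathcal{S}^* = -\nabla_{\boldsymbol{a}} E^*$, where $E^* = -Q^*/\alpha + c(\boldsymbol{s})$. So the difference field $\mathcal{S}_\phi - \mathcal{S}^*$ is itself the gradient of the scalar $-(E_\phi - E^*)$. This is exactly where conservativeness matters: with a non-conservative field, no potential difference would exist and the line integral would be path dependent.

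First, parametrize $\boldsymbol{a}(\tau) = \boldsymbol{a}' + \tau(\boldsymbol{a} - \boldsymbol{a}')$ for $\tau \in [0,1]$. Set $g(\tau) = E_\phi(\boldsymbol{a}(\tau),\boldsymbol{s}) - E^*(\boldsymbol{a}(\tau),\boldsymbol{s})$. Then $g(1) - g(0) = \Delta E_\phi(\boldsymbol{a},\boldsymbol{a}') - \Delta E^*(\boldsymbol{a},\boldsymbol{a}')$. The state-dependent constant $c(\boldsymbol{s})$ cancels here because both endpoints share the same $\boldsymbol{s}$, consistent with Proposition~\ref{prop:identifiability}.

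Second, assuming $E_\phi$ and $E^*$ are $C^1$ in $\boldsymbol{a}$ (needed for the chain rule), write
\[
g(1)-g(0) = \int_0^1 \big(\nabla_{\boldsymbol{a}} E_\phi - \nabla_{\boldsymbol{a}} E^*\big)(\boldsymbol{a}(\tau),\boldsymbol{s})^\top (\boldsymbol{a}-\boldsymbol{a}')\, d\tau = -\int_0^1 \big(\mathcal{S}_\phi - \mathcal{S}^*\big)(\boldsymbol{a}(\tau),\boldsymbol{s})^\top (\boldsymbol{a}-\boldsymbol{a}')\, d\tau .
\]

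Third, take absolute values and apply Cauchy--Schwarz pointwise inside the integral. The uniform bound $\|\mathcal{S}_\phi - \mathcal{S}^*\|_2 \le \eta$ then gives an integrand at most $\eta\|\boldsymbol{a}-\boldsymbol{a}'\|_2$. Integrating over $[0,1]$ yields the claim.

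The main obstacle is the regularity and domain setup rather than the estimate itself. We need the segment to lie in the region where the uniform bound holds; this is immediate if the action space is convex, e.g.\ $\mathbb{R}^d$. If it is not convex, I would instead integrate along any rectifiable path in the domain, which gives the bound with $\|\boldsymbol{a}-\boldsymbol{a}'\|_2$ replaced by the path length. We also need the energies to be differentiable in $\boldsymbol{a}$, which holds for smooth network parameterizations and for $Q^*$ under Assumption~\ref{ass:maxent} whenever $\pi_E$ has a differentiable log-density.
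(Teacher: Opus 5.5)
Your proposal is correct and matches the paper's proof step for step: write the energy differences as line integrals of $-\mathcal{S}$ along the straight segment $\boldsymbol{a}' + \tau(\boldsymbol{a}-\boldsymbol{a}')$, apply Cauchy--Schwarz pointwise, and integrate the uniform bound $\eta$. Your remarks on $C^1$ regularity of both energies and on the segment lying in the domain where the bound holds make explicit assumptions that the paper leaves implicit.
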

\textit{$\square$ Proof in Appendix~\ref{appendix:proof_error}.}
\begin{remark}[On the Lipschitz Assumption]
The uniform bound $\|\mathcal{S}_\phi(\boldsymbol{a}, \boldsymbol{s}) - \mathcal{S}^*(\boldsymbol{a}, \boldsymbol{s})\|_2 \leq \eta$ is mild and typically satisfied in practice. Neural networks with bounded weights and Lipschitz activation functions are inherently Lipschitz continuous~\citep{lips}. 
\end{remark}

This result confirms that our method degrades gracefully. Small errors in the score field translate to bounded errors in action ranking, scaling linearly with the distance between actions. In the context of downstream RL, this means that for actions within a bounded action space of diameter $\text{diam}(\mathcal{A})$, the maximum reward estimation error per step is $\alpha \cdot \epsilon \cdot \text{diam}(\mathcal{A})$, which remains controlled as long as score matching is accurate.
\section{Methodology}
\label{sec:method}

\begin{algorithm}[t]
\caption{\model Training}
\label{alg:training}
\begin{algorithmic}[1]
\REQUIRE Expert dataset $\mathcal{D} = \{(\boldsymbol{s}_i, \boldsymbol{a}_i)\}$, energy network $E_\phi$, noise schedule $\sigma(t) = \sigma_{\min}^{1-t/T}\sigma_{\max}^{t/T}$
\FOR{each training iteration}
  \STATE Sample batch $(\boldsymbol{s}, \boldsymbol{a}) \sim \mathcal{D}$
  \STATE Sample $t \sim \mathcal{U}[0,T]$, $\boldsymbol{\varepsilon} \sim \mathcal{N}(\boldsymbol{0},\boldsymbol{I})$
  \STATE Form noisy action $\boldsymbol{a}_t = \boldsymbol{a} + \sigma(t)\boldsymbol{\varepsilon}$
  \STATE Compute $\mathcal{S}_\phi(\boldsymbol{a}_t, \boldsymbol{s}, t) = -\nabla_{\boldsymbol{a}_t} E_\phi(\boldsymbol{a}_t, \boldsymbol{s}, t)$ via autodiff
  \STATE Compute loss $\mathcal{L} = \sigma^2(t)\|\mathcal{S}_\phi(\boldsymbol{a}_t, \boldsymbol{s}, t) + \boldsymbol{\varepsilon}/\sigma(t)\|^2$
  \STATE Update $\phi$ by gradient descent on $\mathcal{L}$
\ENDFOR
\end{algorithmic}
\end{algorithm}

\begin{algorithm}[t]
\caption{\model Action Generation}
\label{alg:inference}
\begin{algorithmic}[1]
\REQUIRE State $\boldsymbol{s}$, trained $E_\phi$, steps $K$, endpoint $\gamma = 10^{-3}$
\STATE Sample $\boldsymbol{a}_T \sim \mathcal{N}(\boldsymbol{0}, \sigma^2(T)\boldsymbol{I})$
\STATE $\Delta t \gets (T - \gamma)/K$
\FOR{$k = 0, \ldots, K-1$}
  \STATE $t_k \gets T - k\Delta t$
  \STATE $\boldsymbol{g}_k \gets \frac{1}{2}\frac{d[\sigma^2(t_k)]}{dt} \nabla_{\boldsymbol{a}} E_\phi(\boldsymbol{a}_k, \boldsymbol{s}, t_k)$
  \STATE $\boldsymbol{a}_{k+1} \gets \boldsymbol{a}_k - \Delta t \cdot \boldsymbol{g}_k$
\ENDFOR
\RETURN Action $\boldsymbol{a}_K$, Energy $E_\phi(\boldsymbol{a}_K, \boldsymbol{s}, \gamma)$
\end{algorithmic}
\end{algorithm}


\paragraph{Architecture}
\label{sec:parameterization}
The theoretical constraints identified in Sec.~\ref{sec:theory} directly lead to our architectural choices. 
To satisfy the conservative field requirement (\S\ref{sec:integrability}), we do not directly regress the vector-valued score. Instead, we parameterize a scalar energy function $E_\phi: \mathcal{A} \times \mathcal{S} \times [0,T] \to \mathbb{R}$ and obtain the score via automatic differentiation:
\begin{equation}
\mathcal{S}_\phi(\boldsymbol{a}, \boldsymbol{s}, t) \coloneqq -\nabla_{\boldsymbol{a}} E_\phi(\boldsymbol{a}, \boldsymbol{s}, t).
\label{eq:energy_score_method}
\end{equation}
By construction, $\nabla_{\boldsymbol{a}} \times \mathcal{S}_\phi \equiv 0$, ensuring that learned preferences remain transitive and physically realizable. Detailed network implementation can be found in Appendix~\ref{appendix:imp_ours}.


\paragraph{Training Paradigm}
\label{sec:training}

We estimate the energy landscape using denoising score matching. Following the variance-exploding formulation with noise schedule $\sigma(t) = \sigma_{\min}^{1-t/T}\sigma_{\max}^{t/T}$ (where $\sigma_{\min} = 0.01$, $\sigma_{\max} = 10.0$, $T = 1.0$), we minimize:
\begin{equation}
\mathcal{L}(\phi) = \mathbb{E}_{t, \boldsymbol{a}_0, \boldsymbol{\varepsilon}}\left[\sigma^2(t) \left\| -\nabla_{\boldsymbol{a}_t} E_\phi(\boldsymbol{a}_t, \boldsymbol{s}, t) + \frac{\boldsymbol{\varepsilon}}{\sigma(t)} \right\|^2 \right],
\label{eq:dsm_loss_method}
\end{equation}
where $\boldsymbol{a}_t = \boldsymbol{a}_0 + \sigma(t)\boldsymbol{\varepsilon}$, with $t \sim \mathcal{U}[0, T]$, $(\boldsymbol{s}, \boldsymbol{a}_0) \sim \mathcal{D}$, $\boldsymbol{\varepsilon} \sim \mathcal{N}(\boldsymbol{0}, \boldsymbol{I})$, and $\lambda(t) = \sigma^2(t)$ ensures uniform contribution across noise levels. As $t \to 0$, minimizing this objective is equivalent to recovering the maximum-entropy reward gradient (Theorem~\ref{thm:main_equivalence}).

\paragraph{Reward Extraction}
\label{sec:reward_extraction}

While Proposition~\ref{prop:identifiability} states that the raw energy $E_\phi$ preserves within-state action rankings, the arbitrary offset $c(\boldsymbol{s})$ introduces high variance when $E_\phi$ is used as a reward signal in downstream RL. To mitigate this, we introduce \textit{centered shaping}:
\begin{equation}
\tilde{r}_\phi(\boldsymbol{a}, \boldsymbol{s}) \coloneqq -\left(E_\phi(\boldsymbol{a}, \boldsymbol{s}, \epsilon) - \underbrace{\mathbb{E}_{\boldsymbol{a}' \sim \mathcal{N}(\boldsymbol{0},\boldsymbol{I})}[\,E_\phi(\boldsymbol{a}',\boldsymbol{s}, \gamma)\,]}_{\text{State-dependent Baseline}}\right),
\label{eq:centered_reward}
\end{equation}
where $\gamma = 10^{-3}$ is the ODE endpoint. By subtracting the expected energy under a reference distribution, we effectively normalize the state-dependent offset, centering the reward at every state. This ensures the shaping signal reflects only relative action preferences at a given state. 

The baseline is approximated via Monte Carlo sampling with $M = 16$ samples from $\mathcal{N}(\boldsymbol{0}, \boldsymbol{I})$ (Actions are standardized to approximately unit variance; see \S\ref{sec:setup}). Unlike methods that require stochastic trace estimation (e.g., Hutchinson's estimator for CNF log-likelihoods)~\cite{cnf}, our baseline computation is deterministic for a fixed set of reference samples, yielding a low-variance reward signal for policy gradient updates.


\begin{figure}[t]
    \centering
    \includegraphics[width=\linewidth]{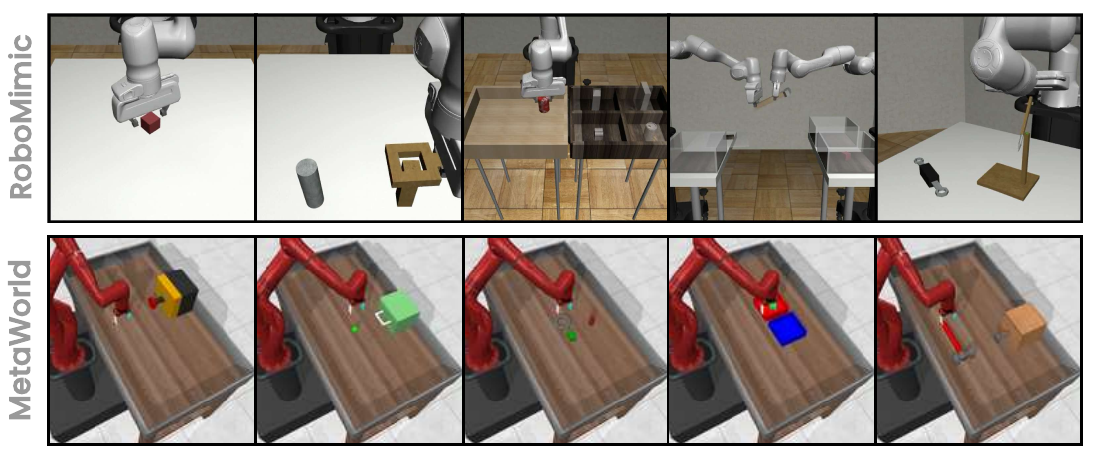}
    \caption{\textbf{Evaluation tasks.} We test on manipulation tasks spanning varying difficulty on RoboMimic and Meta-World.}
    \label{fig:dataset}
    \vspace{-20pt}
\end{figure}
\begin{figure*}[t]
    \centering
    \includegraphics[width=\linewidth]{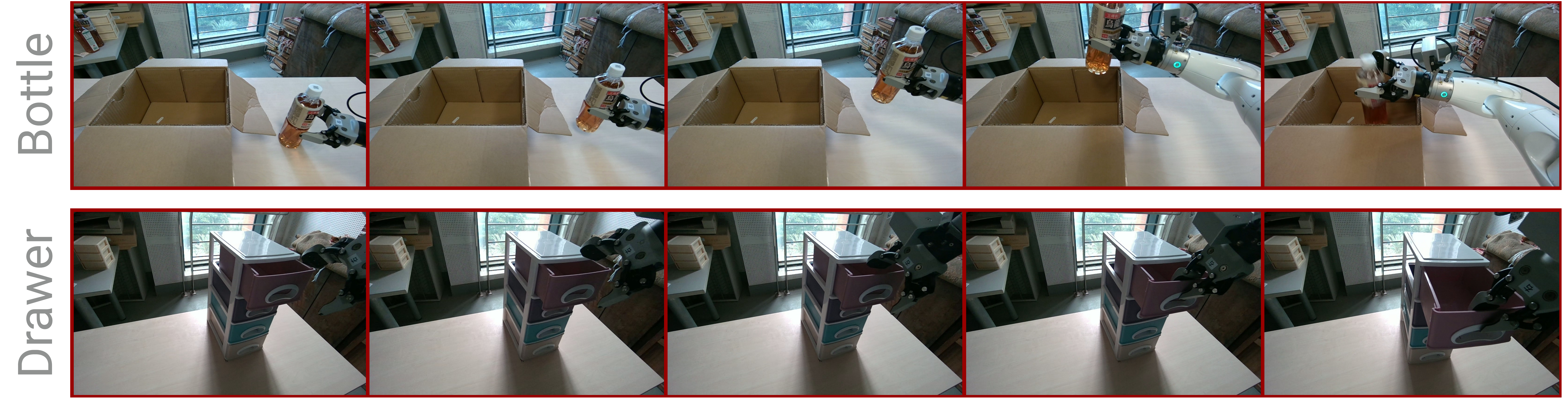}
    \caption{\textbf{Real-world evaluation tasks.} We evaluate on two contact-rich manipulation tasks: \textbf{Bottle} (top), where the robot must grasp a bottle and place it into a cardboard box, and \textbf{Drawer} (bottom), where the robot must pull the drawer open.}
    \label{fig:real_tasks}
\end{figure*}

\section{Experiments}
\label{sec:experiment}
We design our experimental evaluation to address following research questions:
 \textbf{RQ1:} Does explicit energy parameterization preserve strong behavior cloning performance? 
 \textbf{RQ2:} Can the energy-parameterized policy transfer to real-world robotic manipulation tasks?
 \textbf{RQ3:} Can the learned energy serve as an effective reward signal for downstream reinforcement learning?
 \textbf{RQ4:} Does integrability improve robustness under distribution shift, as predicted by Lemma~\ref{thm:generalization}?
\textbf{RQ5:} How sensitive is \model to hyperparameters?
\textbf{RQ6:} Does \model achieve competitive inference speed compared to existing methods?

\subsection{Experimental Setup}
\label{sec:setup}

\paragraph{Simulation Benchmarks.} We evaluate our approach on two widely used manipulation benchmarks RoboMimic~\cite{robomimic} and Meta-World~\cite{metaworld}. Specifically, we evaluate on five RoboMimic tasks (Lift, Can, Square, Transport, ToolHang) and five Meta-World tasks (ButtonPress, DrawerOpen, Assembly, BinPicking, Hammer). Figure~\ref{fig:dataset} illustrates the complete task suite. These environments span a range of difficulty levels, from simple pick-and-place operations to complex multi-stage manipulation requiring precise coordination. Detailed task descriptions are provided in Appendix~\ref{appendix:tasks}. Following standard practice~\cite{act, dp}, all actions are standardized to zero mean and unit variance using statistics computed from the training demonstrations.
\paragraph{Baselines.} We compare \model against a comprehensive set of baselines spanning three categories. We include \textbf{autoregressive policies}: LSTM-GMM~\cite{lstmgmm}, which combines recurrent temporal modeling with Gaussian mixture outputs for multimodal action prediction; \textbf{generative policies}: Diffusion Policy~\cite{dp}, which learns action distributions through iterative denoising, and Flow Policy~\cite{flowpolicy}, which employs continuous normalizing flows for density estimation;\textbf{energy-based methods}: Implicit BC (IBC)~\cite{ibc}, which parameterizes policies implicitly through energy minimization and EBT-Policy~\cite{ebt}, which combines energy-based modeling with transformer architectures; \textbf{inverse reinforcement learning methods}: EBIL~\cite{ebil}, NEAR~\cite{near}, and IQ-Learn~\cite{iql}, which recover reward functions from demonstrations through different adversarial or information-theoretic objectives. The detailed implementation of these baselines are in~\ref{appendix:imp_baseline}.

\subsection{Imitation Learning Performance (RQ1)}
\label{sec:bc_results}

Tables~\ref{tab:bc_results} and~\ref{tab:metaworld} report success rates on RoboMimic and Meta-World benchmarks respectively. On RoboMimic, \model achieves the highest average success rate of 93.8\%, outperforming Diffusion Policy (91.2\%) and Flow Policy (89.6\%). The improvements are particularly large on challenging tasks: \model achieves 84.2\% on ToolHang compared to 77.2\% for Diffusion Policy. On Meta-World, \model similarly leads with 92.5\% average success, demonstrating consistent performance across diverse manipulation scenarios. Demonstrations of these tasks can be found in Appendix~\ref{appendix:demo_sim}. 
Notably, \model also outperforms existing energy-based approaches.  These results indicates that our conservative parameterization and flow-matching training objective can further enhance energy-based policy representation.

\begin{table}[t]
\centering
\caption{Success rates (\%) on RoboMimic tasks (ph). Mean $\pm$ std over 3 seeds. \textbf{Bold}: best, \underline{underline}: second best.}
\label{tab:bc_results}
\resizebox{\linewidth}{!}{%
\begin{tabular}{@{}lcccccc@{}}
\toprule
Method & Lift & Can & Square & Transport & ToolHang & Avg. \\
\midrule
LSTM-GMM &
97.8\tiny$\pm$1.7 & 71.4\tiny$\pm$8.4 & 64.3\tiny$\pm$2.3 & 65.6\tiny$\pm$4.9 & 46.0\tiny$\pm$6.0 & 69.0 \\
Diffusion Policy &
\textbf{100.0}\tiny$\pm$0.0 & \underline{99.2}\tiny$\pm$0.2 & \underline{93.5}\tiny$\pm$0.6 & \underline{85.9}\tiny$\pm$1.5 & \underline{77.2}\tiny$\pm$1.2 & \underline{91.2} \\
Flow Policy &
\underline{99.6}\tiny$\pm$0.4 & 98.4\tiny$\pm$0.8 & 91.8\tiny$\pm$1.2 & 83.6\tiny$\pm$2.0 & 74.8\tiny$\pm$2.4 & 89.6 \\
EBT Policy & 
96.2\tiny$\pm$1.6 & 88.6\tiny$\pm$3.2 & 78.4\tiny$\pm$3.8 & 72.4\tiny$\pm$4.2 & 58.6\tiny$\pm$4.8 & 78.8 \\
EBIL &
92.4\tiny$\pm$3.2 & 76.8\tiny$\pm$5.4 & 58.2\tiny$\pm$6.2 & 48.6\tiny$\pm$5.8 & 32.4\tiny$\pm$6.4 & 61.7 \\
NEAR &
93.6\tiny$\pm$2.8 & 78.4\tiny$\pm$4.8 & 71.4\tiny$\pm$5.6 & 52.2\tiny$\pm$5.4 & 36.8\tiny$\pm$5.8 & 66.5 \\
IQ-Learn &
95.2\tiny$\pm$2.2 & 82.6\tiny$\pm$4.2 & 68.8\tiny$\pm$4.8 & 58.4\tiny$\pm$4.6 & 44.2\tiny$\pm$5.2 & 69.8 \\
Implicit BC &
70.9\tiny$\pm$20.8 & 30.8\tiny$\pm$2.6 & 10.2\tiny$\pm$0.1 & 0.0\tiny$\pm$0.0 & 0.0\tiny$\pm$0.0 & 22.4 \\
\rowcolor{pink!30} Ours &
\textbf{100.0}\tiny$\pm$0.0 & \textbf{100.0}\tiny$\pm$0.0 & \textbf{95.3}\tiny$\pm$0.5 & \textbf{89.4}\tiny$\pm$1.6 & \textbf{84.2}\tiny$\pm$1.4 & \textbf{93.8} \\
\bottomrule
\end{tabular}
}
\vspace{-10pt}
\end{table}

\begin{table}[t]
\centering
\caption{Success rates (\%) on Meta-World tasks. Mean $\pm$ std over 5 seeds. \textbf{Bold}: best, \underline{underline}: second best.}
\label{tab:metaworld}
\resizebox{\linewidth}{!}{%
\begin{tabular}{@{}lcccccc@{}}
\toprule
Method & Button & Drawer & Assembly & Bin & Hammer & Avg. \\
\midrule
LSTM-GMM & 80.2\tiny$\pm$4.2 & 74.6\tiny$\pm$4.6 & 48.4\tiny$\pm$5.8 & 66.8\tiny$\pm$5.2 & 70.6\tiny$\pm$4.8 & 68.1 \\
Diffusion Policy & \textbf{100.0}\tiny$\pm$0.0 & \underline{93.6}\tiny$\pm$1.6 & \underline{76.4}\tiny$\pm$3.4 & \underline{89.6}\tiny$\pm$2.2 & \underline{94.0}\tiny$\pm$1.8 & \underline{90.7} \\
Flow Policy & \textbf{100.0}\tiny$\pm$0.0 & 92.8\tiny$\pm$1.8 & 74.8\tiny$\pm$3.6 & 87.6\tiny$\pm$2.4 & 92.2\tiny$\pm$2.0 & 89.5 \\
EBT-Policy & \underline{84.2}\tiny$\pm$3.4 & 81.6\tiny$\pm$3.8 & 62.4\tiny$\pm$4.8 & 75.8\tiny$\pm$4.2 & 85.0\tiny$\pm$3.6 & 77.8 \\ 
EBIL & 74.6\tiny$\pm$5.4 & 68.2\tiny$\pm$5.8 & 38.6\tiny$\pm$6.6 & 58.4\tiny$\pm$6.0 & 64.8\tiny$\pm$5.6 & 60.9 \\
NEAR & 76.8\tiny$\pm$5.0 & 70.4\tiny$\pm$5.4 & 42.2\tiny$\pm$6.2 & 61.6\tiny$\pm$5.6 & 67.0\tiny$\pm$5.2 & 63.6 \\
IQ-Learn & 76.4\tiny$\pm$4.2 & 72.8\tiny$\pm$4.6 & 52.6\tiny$\pm$5.4 & 66.2\tiny$\pm$5.0 & 76.5\tiny$\pm$4.4 & 68.9 \\
Implicit BC & 28.4\tiny$\pm$8.2 & 24.6\tiny$\pm$7.4 & 12.8\tiny$\pm$5.6 & 18.2\tiny$\pm$6.8 & 26.0\tiny$\pm$7.8 & 22.0 \\
\rowcolor{pink!30} Ours & \textbf{100.0}\tiny$\pm$0.0 & \textbf{94.2}\tiny$\pm$1.4 & \textbf{82.6}\tiny$\pm$2.8 & \textbf{90.9}\tiny$\pm$1.9 & \textbf{94.6}\tiny$\pm$1.5 & \textbf{92.5} \\
\bottomrule
\end{tabular}
}
\vspace{-20pt}
\end{table}

\subsection{Real Robot Deployment (RQ2)}
\label{sec:real_results}

To validate real-world applicability, we deploy \model on a physical robot platform and evaluate whether the learned energy-parameterized policy can transfer effectively to contact-rich manipulation scenarios. Specifically, we conduct experiments using AGIBOT G1 robot~\footnote{\url{https://www.agibot.com/products/G1}} equipped with 7-DoF arms and parallel-jaw gripper. Visual observations are captured by a single RGB camera mounted fixed at head. 
We evaluate on two manipulation tasks \texttt{Bottle} and \texttt{Drawer} with 20 expert demonstration trajectories. Our \model obtained 100\% success rate on both tasks, with 3 initial position change, each with 20 rollouts. One success trajectory of \model for each task is shown in Figure~\ref{fig:real_tasks}. Qualitatively, we observe that \model produces smoother trajectories with fewer hesitations near contact points. More details about the real robot experiment are in Appendix~\ref{appendix:demo_real}.

\subsection{Reward Quality (RQ3)}
\label{sec:rl_results}
A central advantage of our framework is that the learned energy function serves as reward signal for reinforcement learning, enabling policy training without access to ground-truth environment rewards. We evaluate this by training Soft Actor-Critic~(SAC)~\cite{sac} agents for 200k environment steps on RoboMimic Square and Transport. Detailed protocols are provided in Appendix~\ref{appendix:imp_rl}.

\begin{figure}[h]
    \centering
    \includegraphics[width=\linewidth]{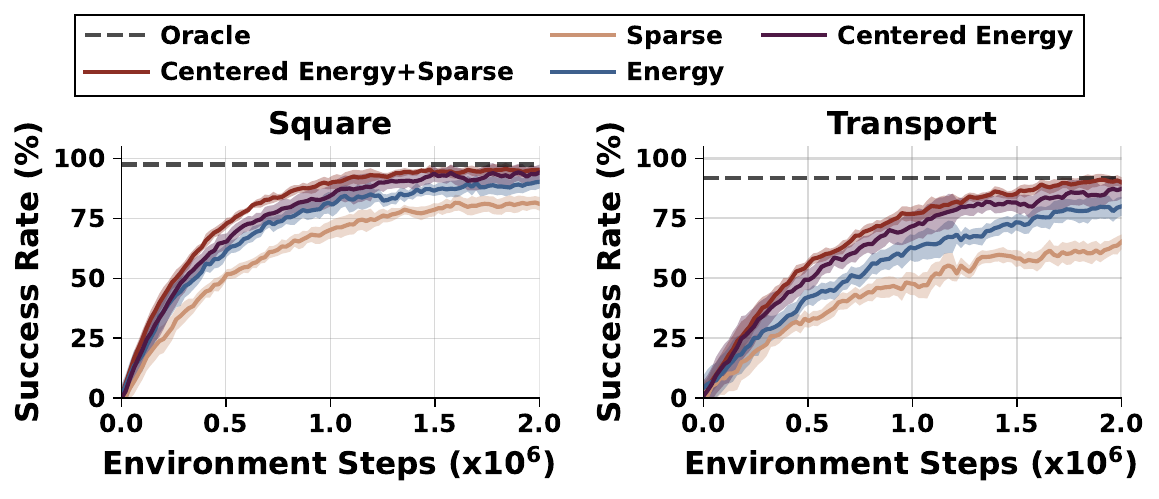}
    \caption{\textbf{SAC training using different reward signals.} We compare our energy-based rewards against sparse task signals and oracle dense rewards. }
    \label{fig:rl}
\end{figure}

Figure~\ref{fig:rl} compares our centered shaping with sparse task rewards, raw energy rewards, and oracle dense rewards. With sparse rewards, the agent gets no signal until it succeeds by chance, which makes early training slow and noisy. Raw energy rewards are dense, but they do not reliably push the agent toward the goal: maximizing likelihood under demonstrations can encourage staying in common states instead of making progress, leading to early plateaus. Our centered formulation (Eq.~\ref{eq:centered_reward}) fixes this by basing reward on state transitions rather than state density, so the learned energy directly encourages forward progress and achieves near-oracle success on both tasks. Notably, \textit{Centered Energy+Sparse} performs best, suggesting that the energy reward provides step-by-step guidance, while the sparse reward ensures the policy still optimizes for task completion.

\begin{figure*}[ht]
    \centering
    \includegraphics[width=0.95\linewidth]{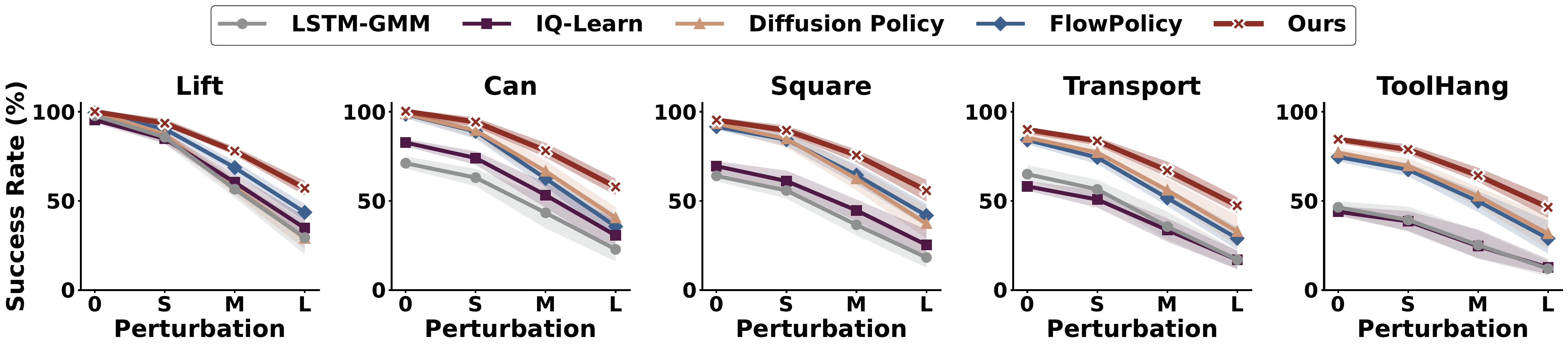}
    \caption{\textbf{OOD generalization on RoboMimic.} Success rate vs.\ initial position perturbation magnitude. \model degrades more gracefully than baselines, with the gap widening at larger perturbations. Shaded regions indicate 95\% confidence intervals.}
    \label{fig:ood}
\end{figure*}

\subsection{Out-of-Distribution Generalization (RQ4)}
\label{sec:ood_results}

To validate Lemma~\ref{thm:generalization}, which posits that conservative fields generalize better to novel states, we evaluate performance under increasing initial position perturbations (levels 0, S, M, L; see Appendix~\ref{appendix:imp_ood}). As shown in Figure~\ref{fig:ood}, while all methods can achieve high performance in-distribution, \model demonstrates superior stability as perturbation magnitude increases. Across these tasks, \model outperforms Diffusion and Flow Policy baselines at medium and large perturbation levels, maintaining robust success rates where unconstrained models degrade. These results confirm that the curl-free constraint acts as a powerful geometric regularizer, preventing the learning of latent artifacts and improving extrapolation in tasks with spatial variability.

\subsection{Reward Extraction Sensitivity (RQ5)}
Table~\ref{tab:tstar_ablation} analyzes sensitivity to the time parameter $\gamma$ used for energy evaluation. Performance remains robust across three orders of magnitude ($\gamma \in [10^{-4}, 10^{-2}]$). Degradation occurs only at larger values ($\gamma \geq 0.1$) where the noised distribution diverges significantly from the data distribution, thereby weakening the approximation of the score function.

\begin{table}[t]
\centering
\caption{Sensitivity to reward extraction time $\gamma$. Success rate (\%) on RoboMimic Square after 200K SAC steps.}
\label{tab:tstar_ablation}
\resizebox{\linewidth}{!}{%
\begin{tabular}{@{}lccccc@{}}
\toprule
$\gamma$ & $10^{-4}$ & $10^{-3}$ & $10^{-2}$ & $10^{-1}$ & $0.5$ \\
\midrule
Success (\%) & 94.2\tiny$\pm$2.6 & \textbf{95.3}\tiny$\pm$2.4 & 88.4\tiny$\pm$1.8 & 78.4\tiny$\pm$3.4 & 72.6\tiny$\pm$4.8 \\
\bottomrule
\end{tabular}}
\vspace{-10pt}
\end{table}

\subsection{Inference Efficiency (RQ6)} 
Table~\ref{tab:onepass} demonstrates that \model achieves a favorable balance of speed and utility. unlike Implicit BC, which requires computationally expensive Langevin sampling for high performance, \model attains superior success rates with latency comparable to the non-energy-based Flow Policy. This confirms that \model provides the benefits of an explicit energy function without the runtime prohibitive costs typically associated with EBMs.

\begin{table}[t]
\centering
\caption{Inference comparison on RoboMimic Square. Latency measured for 10Hz control on NVIDIA A100.}
\label{tab:onepass}
\small
\resizebox{\linewidth}{!}{%
\begin{tabular}{@{}lccc@{}}
\toprule
Method & Success (\%) $\uparrow$ & Latency (ms) $\downarrow$ & Exposes Scalar \\
\midrule
Implicit BC (50 Langevin) & 10.2\tiny$\pm$3.2 & 52.4 & \ding{52} \\
Implicit BC (10 Langevin) & 0.0\tiny$\pm$0.0 & 12.8 & \ding{52} \\
Diffusion Policy (100 DDPM) & 93.5\tiny$\pm$3.2 & 32.4 & \ding{56}\\
Diffusion Policy (20 DDIM) & 90.4\tiny$\pm$4.6 & 9.1 & \ding{56} \\
Flow Policy & 91.8\tiny$\pm$1.4 & 8.2 & \ding{56} \\
\rowcolor{pink!30}  \model ($K=10$) & 94.0\tiny$\pm$1.8 & 9.8 & \ding{52} \\
\rowcolor{pink!30}  \model ($K=20$) & \textbf{95.3}\tiny$\pm$0.8 & 11.4 & \ding{52} \\

\bottomrule
\end{tabular}
}
\vspace{-20pt}
\end{table}

\section{Related Work}
\label{sec:related}
\subsection{Generative Models for Behavior Cloning}
Behavior cloning learns policies by directly mimicking expert demonstrations, with recent advances leveraging expressive generative models to capture multi-modal action distributions~\citep{ir1, survey1, survey2}. Diffusion Policy~\citep{dp} demonstrated that diffusion-based action generation significantly outperforms prior methods on contact-rich manipulation. Subsequent works have extended this framework to 3D visual manipulation~\citep{dp3}, hierarchical planning~\citep{plan}, and language-conditioned policies~\citep{vla}. To address computational overhead, efficient variants based on consistency distillation~\citep{consistency} and flow matching~\citep{flow1, flow3,flow4} have been proposed. While these policies excel at modeling complex distributions, they remain limited to imitating trajectories without capturing underlying intent, limiting generalization to out-of-distribution states~\citep{ood3, ood2}. Our work adds an integrability constraint via explicit energy parameterization, complementing these efficiency-focused approaches while enabling reward extraction.
\subsection{Inverse Reinforcement Learning}
Unlike behavior cloning, IRL seeks to recover the latent reward behind expert behavior. Classical methods such as maximum-entropy IRL~\citep{maxentirl} and Bayesian formulations~\citep{bay} suffered from computational intractability due to repeated policy optimization. Adversarial methods address this by casting reward learning as occupancy measure matching: GAIL~\citep{gail} and AIRL~\citep{airl} enable scalable IRL through adversarial updates but inherit training instability and mode collapse issues~\citep{adv2}.
Energy-Based Models offer an alternative, directly parameterizing reward as a scalar energy~\citep{ebm1, ebm2}. While EBMs avoid adversarial dynamics, they require approximating intractable partition functions via expensive MCMC sampling, which scales poorly to high-dimensional action spaces.
Recent works bridge generative modeling and IRL by replacing adversarial discriminators with diffusion models~\citep{diffail, drail, fmirl}. However, these approaches treat diffusion as a drop-in discriminator replacement rather than exploiting the deeper connection between denoising and energy landscapes. 

\subsection{Energy-Based Imitation Learning}
Energy-based formulations perform expert imitation as learning a scalar function whose minima correspond to expert-like actions or trajectories. In this view, energy can play two distinct roles in imitation learning: (i) an implicit \emph{policy parameterization} used directly for action selection, or (ii) a learned \emph{surrogate reward} that is subsequently optimized by RL~\cite{gmsurvey}.

On the policy side, Implicit Behavioral Cloning (IBC) learns an energy over state-action pairs and predicts actions by minimizing this energy without explicit likelihood modeling~\citep{ibc}. Subsequent work improves training stability through contrastive objectives and refined negative-sampling schemes~\citep{ibc1, ibc2}, while EBT-Policy scales this paradigm with transformer-based energy functions and iterative inference, achieving strong robustness with fewer inference steps than diffusion policies~\citep{ebt}. However, these methods treat the learned energy as a \emph{decision score} rather than an \emph{identifiable reward}, and inference relies on iterative optimization whose dynamics need not correspond to a well-defined potential.

On the reward-learning side, maximum-entropy inverse optimal control can be interpreted through an energy perspective, where costs define an unnormalized trajectory distribution~\citep{maxentirl, airl, gcl}. EBIL~\cite{ebil} makes this connection explicit and proposes a two-stage pipeline: first estimate the expert energy via score matching, then treats the recovered energy as a reward for downstream maximum-entropy RL. Related approaches such as NEAR~\cite{near} similarly learn energy-based rewards and then perform policy optimization. While these methods can yield explainable reward signals, they do not leverage the deeper structural link between denoising dynamics and conservative energy landscapes for \emph{simultaneous} policy learning and reward recovery.

\section{Conclusion}
\label{sec:conclusion}
We propose \model, a framework that bridges diffusion-based imitation learning and inverse reinforcement learning through energy-based parameterization. Our theoretical analysis establishes three key results: (1) the score function of an optimal policy encodes the gradient of its soft Q-function, enabling reward recovery via score matching without adversarial optimization; (2) constraining the learned vector field to be conservative, provably reduces hypothesis complexity and improves generalization; and (3) score estimation errors translate to bounded errors in action preferences, avoiding degradation under approximate learning.
Our empirical findings validate these theoretical insights. \model matches or exceeds state-of-the-art diffusion policies on standard benchmarks while simultaneously exposing a scalar energy that serves as an effective reward signal for policy refinement. Notably, the conservative constraint yields substantial out-of-distribution robustness without sacrificing in-distribution performance.
\newpage
\section{Impact Statement}
This paper presents work whose goal is to advance the field of machine learning. There are many potential societal consequences of our work, none of which we feel must be specifically highlighted here.

\bibliography{ref}
\bibliographystyle{icml2026}

\newpage
\appendix
\onecolumn
\section{Proofs}
\subsection{Proof of Theorem~\ref{thm:complexity_reduction}}
\label{appendix:proof_complexity}
\medskip
\noindent \textbf{Theorem~\ref{thm:complexity_reduction} (Complexity Reduction via Conservative Constraints).}
\textit{Let $\phi: \mathbb{R}^{\text{in}} \to \mathbb{R}^k$ be a neural feature representation with bounded feature norm $\sup_{\boldsymbol{x}} \|\phi(\boldsymbol{x})\|_2 \leq B$ and bounded Jacobian Frobenius norm $\sup_{\boldsymbol{x}} \|J_\phi(\boldsymbol{x})\|_F \leq L$.
Let $\mathcal{F}_{\text{unc}}$ be the class of arbitrary linear vector fields over $\phi$, and $\mathcal{F}_{\text{cons}}$ be the class of conservative vector fields (gradients of potentials over $\phi$).
The Empirical Rademacher complexity of the conservative class is strictly tighter with respect to the output dimension $d$:}
\begin{align}
    \hat{\mathfrak{R}}_S(\mathcal{F}_{\text{unc}}) &\leq \frac{\Lambda B \sqrt{d}}{\sqrt{n}}, \\
    \hat{\mathfrak{R}}_S(\mathcal{F}_{\text{cons}}) &\leq \frac{\Lambda L}{\sqrt{n}}.
\end{align}
\textit{For high-dimensional action spaces where $d$ is large, provided the representation is smooth ($L \ll B\sqrt{d}$), we have $\hat{\mathfrak{R}}_S(\mathcal{F}_{\text{cons}}) \ll \hat{\mathfrak{R}}_S(\mathcal{F}_{\text{unc}})$.}

\begin{proof}
Let $\mathcal{D} = \{\boldsymbol{x}_1, \dots, \boldsymbol{x}_n\}$ be the dataset. The empirical Rademacher complexity is given by:
\begin{equation}
    \hat{\mathfrak{R}}_\mathcal{D}(\mathcal{F}) = \frac{1}{n} \mathbb{E}_{\boldsymbol{\sigma}} \left[ \sup_{f \in \mathcal{F}} \sum_{i=1}^n \langle \boldsymbol{\sigma}_i, f(\boldsymbol{x}_i) \rangle \right],
\end{equation}
where $\boldsymbol{\sigma}_i$ are independent Rademacher vectors in $\mathbb{R}^d$ such that $\mathbb{E}[\boldsymbol{\sigma}_i] = \boldsymbol{0}$ and $\|\boldsymbol{\sigma}_i\|^2 = d$.

\paragraph{Analysis of Unconstrained Fields.}
The unconstrained class consists of functions $f(\boldsymbol{x}) = \boldsymbol{W}\phi(\boldsymbol{x})$ where $\boldsymbol{W} \in \mathbb{R}^{d \times k}$ and $\|\boldsymbol{W}\|_F \leq \Lambda$.
\begin{align*}
    n \hat{\mathfrak{R}}_\mathcal{D}(\mathcal{F}_{\text{unc}}) &= \mathbb{E}_{\boldsymbol{\sigma}} \left[ \sup_{\|\boldsymbol{W}\|_F \leq \Lambda} \sum_{i=1}^n \langle \boldsymbol{\sigma}_i, \boldsymbol{W} \phi(\boldsymbol{x}_i) \rangle \right] \\
    &= \mathbb{E}_{\boldsymbol{\sigma}} \left[ \sup_{\|\boldsymbol{W}\|_F \leq \Lambda} \left\langle \boldsymbol{W}, \sum_{i=1}^n \boldsymbol{\sigma}_i \phi(\boldsymbol{x}_i)^\top \right\rangle_F \right].
\end{align*}
By the Cauchy-Schwarz inequality for the Frobenius inner product, the supremum is attained when $\boldsymbol{W}$ is aligned with the random sum. Thus:
$$ n \hat{\mathfrak{R}}_\mathcal{D}(\mathcal{F}_{\text{unc}}) \leq \Lambda \cdot \mathbb{E}_{\boldsymbol{\sigma}} \left[ \left\| \sum_{i=1}^n \boldsymbol{\sigma}_i \phi(\boldsymbol{x}_i)^\top \right\|_F \right]. $$
Using Jensen's inequality and noting that cross-terms $\mathbb{E}[\langle \boldsymbol{\sigma}_i, \boldsymbol{\sigma}_j \rangle] = 0$ for $i \neq j$ vanish due to independence:
\begin{align*}
    \mathbb{E} \left\| \sum_{i=1}^n \boldsymbol{\sigma}_i \phi(\boldsymbol{x}_i)^\top \right\|_F &\leq \sqrt{ \sum_{i=1}^n \mathbb{E}_{\boldsymbol{\sigma}} \left[ \|\boldsymbol{\sigma}_i\|^2 \|\phi(\boldsymbol{x}_i)\|^2 \right] } \\
    &= \sqrt{ \sum_{i=1}^n d \cdot \|\phi(\boldsymbol{x}_i)\|^2 } \\
    &\leq \sqrt{n \cdot d \cdot B^2} = B \sqrt{nd}.
\end{align*}
Substituting this back yields the unconstrained bound:
\begin{equation} \label{eq:unc}
    \hat{\mathfrak{R}}_S(\mathcal{F}_{\text{unc}}) \leq \frac{\Lambda B \sqrt{d}}{\sqrt{n}}.
\end{equation}

\paragraph{Analysis of Conservative Fields.}
The conservative class consists of functions $f(\boldsymbol{x}) = J_\phi(\boldsymbol{x})^\top \boldsymbol{w}$ (gradients of $E(\boldsymbol{x}) = \boldsymbol{w}^\top \phi(\boldsymbol{x})$), where $\boldsymbol{w} \in \mathbb{R}^k$ and $\|\boldsymbol{w}\|_2 \leq \Lambda$.
\begin{align*}
    n \hat{\mathfrak{R}}_S(\mathcal{F}_{\text{cons}}) &= \mathbb{E}_{\boldsymbol{\sigma}} \left[ \sup_{\|\boldsymbol{w}\|_2 \leq \Lambda} \sum_{i=1}^n \langle \boldsymbol{\sigma}_i, J_\phi(\boldsymbol{x}_i)^\top \boldsymbol{w} \rangle \right] \\
    &= \mathbb{E}_{\boldsymbol{\sigma}} \left[ \sup_{\|\boldsymbol{w}\|_2 \leq \Lambda} \left\langle \boldsymbol{w}, \sum_{i=1}^n J_\phi(\boldsymbol{x}_i) \boldsymbol{\sigma}_i \right\rangle \right].
\end{align*}
By Cauchy-Schwarz inequality in Euclidean space:
$$ n \hat{\mathfrak{R}}_S(\mathcal{F}_{\text{cons}}) \leq \Lambda \cdot \mathbb{E}_{\boldsymbol{\sigma}} \left[ \left\| \sum_{i=1}^n J_\phi(\boldsymbol{x}_i) \boldsymbol{\sigma}_i \right\|_2 \right]. $$
Again applying Jensen's inequality and independence of $\boldsymbol{\sigma}_i$:
\begin{align*}
    \mathbb{E} \left\| \sum_{i=1}^n J_\phi(\boldsymbol{x}_i) \boldsymbol{\sigma}_i \right\|_2 &\leq \sqrt{ \sum_{i=1}^n \mathbb{E}_{\boldsymbol{\sigma}} \left[ \boldsymbol{\sigma}_i^\top J_\phi(\boldsymbol{x}_i)^\top J_\phi(\boldsymbol{x}_i) \boldsymbol{\sigma}_i \right] }.
\end{align*}
Using the property that for Rademacher vectors $\mathbb{E}[\boldsymbol{\sigma}^\top \boldsymbol{A} \boldsymbol{\sigma}] = \text{Tr}(\boldsymbol{A})$, we have:
\begin{align*}
    \mathbb{E} [\boldsymbol{\sigma}_i^\top J_\phi(\boldsymbol{x}_i)^\top J_\phi(\boldsymbol{x}_i) \boldsymbol{\sigma}_i] &= \text{Tr}(J_\phi(\boldsymbol{x}_i)^\top J_\phi(\boldsymbol{x}_i)) \\
    &= \|J_\phi(\boldsymbol{x}_i)\|_F^2 \leq L^2.
\end{align*}
Thus:
$$ n \hat{\mathfrak{R}}_S(\mathcal{F}_{\text{cons}}) \leq \Lambda \sqrt{n L^2} = \Lambda L \sqrt{n}. $$
Yielding the conservative bound:
\begin{equation} \label{eq:cons}
    \hat{\mathfrak{R}}_S(\mathcal{F}_{\text{cons}}) \leq \frac{\Lambda L}{\sqrt{n}}.
\end{equation}

Comparing Eq.~\eqref{eq:unc} and Eq.~\eqref{eq:cons}, the unconstrained complexity scales explicitly with $\sqrt{d}$ (the square root of the output dimension). In contrast, the conservative complexity scales with $L$ (the smoothness of the representation).

Since neural network representations generally learn smooth manifolds where the tangent space volume (captured by $L$) grows significantly slower than the ambient dimension $d$, the conservative constraint provides a structurally superior generalization guarantee.
\end{proof}

\subsection{Proof of Lemma~\ref{thm:generalization}}
\label{appendix:proof_generalization}
\medskip
\noindent \textbf{Lemma~\ref{thm:generalization} (OOD Generalization).} \textit{Let $\mathcal{D}_S$ be the source training distribution and $\mathcal{D}_T$ be a target (OOD) distribution. Let $h^* \in \mathcal{F}_{\text{cons}}$ be the ground truth conservative field. Assume that all hypotheses in $\mathcal{F}_{\text{cons}}$ and $\mathcal{F}_{\text{unc}}$ are uniformly bounded by $M > 0$. For any learned hypothesis $f$, let the risk be $\mathcal{R}_{\mathcal{D}}(f) = \mathbb{E}_{\boldsymbol{x} \sim \mathcal{D}}[\|f(\boldsymbol{x}) - h^*(\boldsymbol{x})\|^2]$.
The risk on the target domain for the conservative estimator satisfies, with probability at least $1-\delta$:}
\begin{equation*}
    \mathcal{R}_{\mathcal{D}_T}(\hat{f}_{\text{cons}}) \leq \hat{\mathcal{R}}_S(\hat{f}_{\text{cons}}) + \frac{1}{2} d_{\mathcal{H}\Delta\mathcal{H}}(\mathcal{D}_S, \mathcal{D}_T) + \mathcal{O}\left(\frac{M\Lambda L}{\sqrt{n}}\right),
\end{equation*}
\textit{whereas for the unconstrained estimator $\hat{f}_{\text{unc}}$, the complexity term scales with $\mathcal{O}(M\Lambda B \sqrt{d}/\sqrt{n})$.}

\begin{proof}
The proof relies on combining the standard generalization bounds based on Rademacher complexity with the domain adaptation theory introduced by \citet{domain}.

For any hypothesis $h$ in a hypothesis class $\mathcal{H}$, the relationship between the risk on the target distribution $\mathcal{R}_{\mathcal{D}_T}(h)$ and the source distribution $\mathcal{R}_{\mathcal{D}_S}(h)$ is bounded by the discrepancy between the domains. Specifically:
\begin{equation}
    \mathcal{R}_{\mathcal{D}_T}(h) \leq \mathcal{R}_{\mathcal{D}_S}(h) + \frac{1}{2} d_{\mathcal{H}\Delta\mathcal{H}}(\mathcal{D}_S, \mathcal{D}_T) + \lambda,
    \label{eq:da_bound}
\end{equation}
where $d_{\mathcal{H}\Delta\mathcal{H}}$ is the discrepancy distance and $\lambda$ is the combined error of the ideal joint hypothesis. Since we assume the ground truth $h^*$ belongs to the conservative class $\mathcal{F}_{\text{cons}}$, the ideal error $\lambda$ is negligible for the conservative estimator.

Eq.~\eqref{eq:da_bound} relates the \emph{true} population risks. However, learning algorithms minimize the \emph{empirical} source risk $\hat{\mathcal{R}}_S(h)$ on a dataset of size $n$. Standard learning theory bounds the true source risk as:
\begin{equation}
    \mathcal{R}_{\mathcal{D}_S}(h) \leq \hat{\mathcal{R}}_S(h) + 2\mathfrak{R}_S(\ell \circ \mathcal{H}) + \sqrt{\frac{\log(1/\delta)}{2n}},
    \label{eq:gen_bound}
\end{equation}
where $\mathfrak{R}_S(\ell \circ \mathcal{H})$ is the Rademacher complexity of the loss composed with the hypothesis class.

The risk is defined using the squared $L_2$ loss: $\ell(f(\boldsymbol{x}), h^*(\boldsymbol{x})) = \|f(\boldsymbol{x}) - h^*(\boldsymbol{x})\|_2^2$. The squared loss is not globally Lipschitz, but under the boundedness assumption ($\|f(\boldsymbol{x})\|_2 \leq M$ and $\|h^*(\boldsymbol{x})\|_2 \leq M$ for all $\boldsymbol{x}$), the loss is restricted to a bounded domain where:
$$|\ell(y_1) - \ell(y_2)| = |y_1^2 - y_2^2| = |y_1 + y_2||y_1 - y_2| \leq 2M|y_1 - y_2|.$$
Thus, on the bounded domain, the squared loss is Lipschitz with constant $2M$. By Talagrand's contraction lemma:
$$\mathfrak{R}_S(\ell \circ \mathcal{H}) \leq 2M \cdot \mathfrak{R}_S(\mathcal{H}).$$

We now substitute the specific Rademacher complexity bounds derived in Theorem~\ref{thm:complexity_reduction}.

\textit{Case A: Unconstrained Vector Fields ($\mathcal{F}_{\text{unc}}$).}
Theorem~\ref{thm:complexity_reduction} gives $\hat{\mathfrak{R}}_S(\mathcal{F}_{\text{unc}}) \leq \frac{\Lambda B \sqrt{d}}{\sqrt{n}}$. Substituting into Eq.~\eqref{eq:gen_bound}:
\begin{equation}
    \text{GenGap}(\hat{f}_{\text{unc}}) \in \mathcal{O}\left(\frac{M\Lambda B \sqrt{d}}{\sqrt{n}}\right).
\end{equation}

\textit{Case B: Conservative Vector Fields ($\mathcal{F}_{\text{cons}}$).}
Theorem~\ref{thm:complexity_reduction} gives the tighter bound $\hat{\mathfrak{R}}_S(\mathcal{F}_{\text{cons}}) \leq \frac{\Lambda L}{\sqrt{n}}$. Substituting into Eq.~\eqref{eq:gen_bound}:
\begin{equation}
    \text{GenGap}(\hat{f}_{\text{cons}}) \in \mathcal{O}\left(\frac{M\Lambda L}{\sqrt{n}}\right).
\end{equation}

Combining the domain adaptation bound (Eq.~\eqref{eq:da_bound}) with the complexity-based generalization gap, for the conservative estimator:
\begin{align}
    \mathcal{R}_{\mathcal{D}_T}(\hat{f}_{\text{cons}}) &\leq \hat{\mathcal{R}}_S(\hat{f}_{\text{cons}}) + \text{GenGap}(\hat{f}_{\text{cons}}) + \frac{1}{2} d_{\mathcal{H}\Delta\mathcal{H}}(\mathcal{D}_S, \mathcal{D}_T) \\
    &= \hat{\mathcal{R}}_S(\hat{f}_{\text{cons}}) + \frac{1}{2} d_{\mathcal{H}\Delta\mathcal{H}}(\mathcal{D}_S, \mathcal{D}_T) + \mathcal{O}\left(\frac{M\Lambda L}{\sqrt{n}}\right).
\end{align}
For the unconstrained estimator, the complexity term scales with $\sqrt{d}$. Thus, as $d \to \infty$, the bound for the unconstrained field diverges, while the conservative bound remains controlled by the smoothness parameter $L$.
\end{proof}

\subsection{Proof of Proposition~\ref{prop:identifiability}}
\label{appendix:proof_identifiability}
\medskip
\noindent \textbf{Proposition~\ref{prop:identifiability} (Within-State Action Ranking).} \textit{The learned energy provides exact within-state action rankings:
\begin{enumerate}
\item Within-state ranking is exact: For any fixed state $\boldsymbol{s}$, $\arg\min_{\boldsymbol{a}} E_\phi(\boldsymbol{a}, \boldsymbol{s}) = \arg\max_{\boldsymbol{a}} Q^*(\boldsymbol{s}, \boldsymbol{a})$.
\item Cross-state comparison is ambiguous: The difference $E_\phi(\boldsymbol{a}, \boldsymbol{s}) - E_\phi(\boldsymbol{a}', \boldsymbol{s}')$ includes the unknown quantity $c(\boldsymbol{s}) - c(\boldsymbol{s}')$.
\end{enumerate}}

\begin{proof}
From Theorem~\ref{thm:main_equivalence}, the learned energy satisfies:
\begin{equation}
E_\phi(\boldsymbol{a}, \boldsymbol{s}) = -\frac{Q^*(\boldsymbol{s}, \boldsymbol{a})}{\alpha} + c(\boldsymbol{s}),
\end{equation}
where $c(\boldsymbol{s})$ is a state-dependent constant arising from integration.

\paragraph{Within-state ranking.}
For a fixed state $\boldsymbol{s}$, consider any two actions $\boldsymbol{a}_1, \boldsymbol{a}_2 \in A$. The energy difference is:
\begin{align}
E_\phi(\boldsymbol{a}_1, \boldsymbol{s}) - E_\phi(\boldsymbol{a}_2, \boldsymbol{s}) &= \left(-\frac{Q^*(\boldsymbol{s}, \boldsymbol{a}_1)}{\alpha} + c(\boldsymbol{s})\right) - \left(-\frac{Q^*(\boldsymbol{s}, \boldsymbol{a}_2)}{\alpha} + c(\boldsymbol{s})\right) \\
&= -\frac{1}{\alpha}\left(Q^*(\boldsymbol{s}, \boldsymbol{a}_1) - Q^*(\boldsymbol{s}, \boldsymbol{a}_2)\right).
\end{align}
The state-dependent constant $c(\boldsymbol{s})$ cancels. Since $\alpha > 0$:
$$E_\phi(\boldsymbol{a}_1, \boldsymbol{s}) < E_\phi(\boldsymbol{a}_2, \boldsymbol{s}) \iff Q^*(\boldsymbol{s}, \boldsymbol{a}_1) > Q^*(\boldsymbol{s}, \boldsymbol{a}_2).$$
Therefore, $\arg\min_{\boldsymbol{a}} E_\phi(\boldsymbol{a}, \boldsymbol{s}) = \arg\max_{\boldsymbol{a}} Q^*(\boldsymbol{s}, \boldsymbol{a})$.

\paragraph{Cross-state ambiguity.}
For two different states $\boldsymbol{s} \neq \boldsymbol{s}'$ and actions $\boldsymbol{a}, \boldsymbol{a}'$:
\begin{align}
E_\phi(\boldsymbol{a}, \boldsymbol{s}) - E_\phi(\boldsymbol{a}', \boldsymbol{s}') &= -\frac{Q^*(\boldsymbol{s}, \boldsymbol{a})}{\alpha} + c(\boldsymbol{s}) + \frac{Q^*(\boldsymbol{s}', \boldsymbol{a}')}{\alpha} - c(\boldsymbol{s}') \\
&= -\frac{1}{\alpha}\left(Q^*(\boldsymbol{s}, \boldsymbol{a}) - Q^*(\boldsymbol{s}', \boldsymbol{a}')\right) + \underbrace{(c(\boldsymbol{s}) - c(\boldsymbol{s}'))}_{\text{unknown}}.
\end{align}
The term $c(\boldsymbol{s}) - c(\boldsymbol{s}')$ cannot be determined from observations of expert behavior, as demonstrations reveal only which actions are preferred at each state, not the relative value of different states.
\end{proof}

\subsection{Proof of Theorem~\ref{thm:energy_error}}
\label{appendix:proof_error}
\medskip
\noindent \textbf{Theorem~\ref{thm:energy_error} (Lipschitz Continuity of Preferences).} \textit{Assume the learned score satisfies $\|\mathcal{S}_\phi(\boldsymbol{a}, \boldsymbol{s}) - \mathcal{S}^*(\boldsymbol{a}, \boldsymbol{s})\|_2 \leq \epsilon$ uniformly. Let $\Delta E(\boldsymbol{a}, \boldsymbol{a}') = E(\boldsymbol{a}, \boldsymbol{s}) - E(\boldsymbol{a}', \boldsymbol{s})$ be the relative preference between two actions at the same state. Then:}
\begin{equation*}
\left| \Delta E_\phi(\boldsymbol{a}, \boldsymbol{a}') - \Delta E^*(\boldsymbol{a}, \boldsymbol{a}') \right| \leq \epsilon \cdot \|\boldsymbol{a} - \boldsymbol{a}'\|_2.
\end{equation*}

\begin{proof}
For an energy-based model with $p(\boldsymbol{a}|\boldsymbol{s}) \propto \exp(-E(\boldsymbol{a}, \boldsymbol{s}))$, the score function is:
\begin{equation}
    \mathcal{S}^*(\boldsymbol{a}, \boldsymbol{s}) = \nabla_{\boldsymbol{a}} \log p(\boldsymbol{a}|\boldsymbol{s}) = -\nabla_{\boldsymbol{a}} E^*(\boldsymbol{a}, \boldsymbol{s}).
\end{equation}
Similarly, for the learned model: $\nabla_{\boldsymbol{a}} E_\phi(\boldsymbol{a}, \boldsymbol{s}) = -\mathcal{S}_\phi(\boldsymbol{a}, \boldsymbol{s})$.

Define the quantity of interest:
\begin{equation}
    \delta = \left| \Delta E_\phi(\boldsymbol{a}, \boldsymbol{a}') - \Delta E^*(\boldsymbol{a}, \boldsymbol{a}') \right|.
\end{equation}

By the fundamental theorem of calculus for line integrals, the difference in a scalar potential between two points equals the line integral of its gradient. Let $\gamma(t) = \boldsymbol{a}' + t(\boldsymbol{a} - \boldsymbol{a}')$ for $t \in [0, 1]$ be the straight-line path from $\boldsymbol{a}'$ to $\boldsymbol{a}$. Then:
\begin{equation}
    E(\boldsymbol{a}, \boldsymbol{s}) - E(\boldsymbol{a}', \boldsymbol{s}) = \int_{0}^{1} \nabla_{\boldsymbol{a}} E(\gamma(t), \boldsymbol{s}) \cdot (\boldsymbol{a} - \boldsymbol{a}') \, dt.
\end{equation}

Substituting $\nabla_{\boldsymbol{a}} E = -\mathcal{S}$:
\begin{equation}
    E(\boldsymbol{a}, \boldsymbol{s}) - E(\boldsymbol{a}', \boldsymbol{s}) = \int_{0}^{1} -\mathcal{S}(\gamma(t), \boldsymbol{s}) \cdot (\boldsymbol{a} - \boldsymbol{a}') \, dt.
\end{equation}

Therefore:
\begin{align}
    \delta &= \left| \int_{0}^{1} \left( \mathcal{S}^*(\gamma(t), \boldsymbol{s}) - \mathcal{S}_\phi(\gamma(t), \boldsymbol{s}) \right) \cdot (\boldsymbol{a} - \boldsymbol{a}') \, dt \right| \\
    &\leq \int_{0}^{1} \left| \left( \mathcal{S}^*(\gamma(t), \boldsymbol{s}) - \mathcal{S}_\phi(\gamma(t), \boldsymbol{s}) \right) \cdot (\boldsymbol{a} - \boldsymbol{a}') \right| \, dt.
\end{align}

Applying the Cauchy-Schwarz inequality:
\begin{equation}
    \delta \leq \int_{0}^{1} \| \mathcal{S}^*(\gamma(t), \boldsymbol{s}) - \mathcal{S}_\phi(\gamma(t), \boldsymbol{s}) \|_2 \cdot \| \boldsymbol{a} - \boldsymbol{a}' \|_2 \, dt.
\end{equation}

Using the uniform error bound $\|\mathcal{S}_\phi - \mathcal{S}^*\|_2 \leq \epsilon$:
\begin{align}
    \delta &\leq \int_{0}^{1} \epsilon \cdot \| \boldsymbol{a} - \boldsymbol{a}' \|_2 \, dt \\
    &= \epsilon \cdot \| \boldsymbol{a} - \boldsymbol{a}' \|_2.
\end{align}

Thus, $\left| \Delta E_\phi(\boldsymbol{a}, \boldsymbol{a}') - \Delta E^*(\boldsymbol{a}, \boldsymbol{a}') \right| \leq \epsilon \cdot \|\boldsymbol{a} - \boldsymbol{a}'\|_2$.
\end{proof}

\section{Baselines}
\label{appendix:baseline}
To rigorously evaluate the efficacy of \model, we compare against a diverse suite of baselines categorized by their underlying modeling paradigm. These methods represent the current state-of-the-art in imitation learning (IL) and inverse reinforcement learning (IRL):

\textbf{Explicit Autoregressive Policies.} We include \textbf{LSTM-GMM}~\cite{lstmgmm}, a classic baseline that couples a Long Short-Term Memory (LSTM) network with a Gaussian Mixture Model (GMM) output head. This method explicitly maximizes the log-likelihood of expert actions. It serves as a benchmark for recurrent architectures that handle temporal dependencies but are constrained by the parametric assumptions of GMMs when modeling highly discontinuous action manifolds.

\textbf{Generative Policies.} To assess performance against modern generative modeling techniques, we compare against:
\begin{itemize}[leftmargin=*]
\item \textbf{Diffusion Policy (DP)}~\cite{dp}: A state-of-the-art behavior cloning method that represents the policy as a conditional denoising diffusion probabilistic model. DP learns the gradient of the data distribution (score function) to iteratively denoise random noise into expert actions, offering superior stability and multimodal coverage compared to GANs.
\item \textbf{Flow Policy}~\cite{flowpolicy}: A method utilizing continuous normalizing flows to learn complex action distributions via a sequence of invertible transformations. This baseline provides exact likelihood estimation and serves as a representative for bijective generative models.
\end{itemize}

\textbf{Energy-Based Models (EBMs).} We benchmark against methods that parameterize the policy implicitly via an energy function $E(\boldsymbol s, \boldsymbol a)$:
\begin{itemize}[leftmargin=*]
\item \textbf{Implicit BC (IBC)}~\cite{ibc}: A non-parametric approach that learns an energy landscape where expert actions correspond to energy minima. IBC is particularly effective at capturing sharp discontinuities in the action space but relies on inference-time optimization (e.g., Langevin dynamics or CEM).
\item \textbf{EBT-Policy}~\cite{ebt}: An extension of EBMs that incorporates Transformer architectures. This baseline tests the importance of attention mechanisms in energy-based formulations for capturing long-horizon temporal dependencies.
\end{itemize}

\textbf{Inverse Reinforcement Learning (IRL).} Finally, we compare against methods that infer a reward function from demonstrations rather than cloning actions directly. We select \textbf{EBIL}~\cite{ebil}, \textbf{NEAR}~\cite{near}, and \textbf{IQ-Learn}~\cite{iql}. These methods circumvent the instability of traditional adversarial training (e.g., GAIL) by deriving non-adversarial objectives. Specifically, IQ-Learn leverages the relationship between soft Q-learning and policy updates to recover rewards without a minimax game, serving as a strong baseline for sample-efficient reward recovery.

\section{Additional Implementation Details}
\label{appendix:imp}
\subsection{\model Implementation}
\label{appendix:imp_ours}
In this section, we detail the network architecture, training hyperparameters, and inference procedures for \model. Our implementation relies on the PyTorch~\cite{pytorch} framework.

\subsection{Network Architecture}
\label{sec:imp_arch}

We adapt the 1D Conditional U-Net backbone from Diffusion Policy~\cite{dp} to serve as our energy function parameterization. Unlike standard diffusion policies that directly regress the score (noise) field, our network approximates the scalar energy field $E_\phi(\boldsymbol{a}, \boldsymbol{s}, t)$, from which the score is derived via gradients.

\paragraph{State and Time Encoding.}
Since our setting involves low-dimensional state inputs (e.g., joint angles, velocities, object poses) without visual observations:
\begin{itemize}[leftmargin=*]
    \item \textbf{State Conditioning:} The observation sequence $\boldsymbol{s} \in \mathbb{R}^{T_{obs} \times D_s}$ is flattened and projected via a 2-layer MLP (Hidden dim: 128, Activation: Mish) into a conditioning vector $\boldsymbol{c}_{state}$.
    \item \textbf{Time Embedding:} The diffusion timestep $t$ is encoded using sinusoidal positional embeddings followed by a linear projection to match the channel dimensions of the U-Net blocks.
\end{itemize}

\paragraph{Energy Backbone ($E_\phi$).}
The core network takes the noisy action sequence $\boldsymbol{a}_t \in \mathbb{R}^{T_p \times D_a}$ as input.
\begin{itemize}[leftmargin=*]
    \item \textbf{Structure:} The backbone is a 1D Temporal U-Net consisting of down-sampling and up-sampling blocks with kernel size 5. Each block utilizes residual connections and Group Normalization (groups=8). 
    \item \textbf{Conditioning:} The state embedding $\boldsymbol{c}_{state}$ and time embedding are injected into every convolutional block via Feature-wise Linear Modulation (FiLM), ensuring the energy landscape is globally conditioned on the current agent state.
\end{itemize}

\paragraph{Modifications for Energy Parameterization.}
To satisfy the theoretical requirement that our score field be a conservative vector field ($\nabla \times \mathcal{S} = 0$), we modify the standard Diffusion Policy architecture in two ways:

1.  \textbf{Scalar Output Head:} Standard implementations output a tensor of shape $[B, T_p, D_a]$ representing the noise. We replace the final output projection. The final feature map of the U-Net (shape $[B, C, T_p]$) is aggregated via \texttt{GlobalAveragePooling1D} to capture global temporal dependencies. This is passed through a 3-layer MLP ($256 \to 128 \to 1$) to produce the single scalar energy value $E \in \mathbb{R}$.
    
2.  \textbf{$C^2$ Differentiable Activations:} The standard ReLU activation is non-differentiable at zero. Since our training objective (Eq.~\eqref{eq:dsm_loss_method}) involves the derivative of the score (which is the second derivative of the energy), the network must be twice-differentiable ($C^2$). We replace all ReLU activations with \textbf{Mish}. This ensures a smooth gradient flow during the double-backpropagation required for score matching.

\subsection{Differentiable Training Infrastructure}

Training necessitates computing the gradient of the network output with respect to its inputs during the forward pass (to obtain the score $\nabla_{\boldsymbol{a}} E$).

\paragraph{Graph Construction.}
We utilize PyTorch's automatic differentiation engine. For a batch of action sequences $\boldsymbol{a}_t$ and states $\boldsymbol{s}$:
\begin{equation}
    \mathcal{S}_\phi = -\nabla_{\boldsymbol{a}_t} E_\phi(\boldsymbol{a}_t, \boldsymbol{s}, t).
\end{equation}
We invoke \texttt{torch.autograd.grad} with \texttt{create\_graph=True}. This constructs a computational graph of the gradient operation itself, allowing the optimizer to backpropagate the Score Matching loss through the gradient computation to update the network parameters $\phi$.

\paragraph{Spectral Normalization.}
To encourage Lipschitz continuity, which stabilizes the energy magnitudes and prevents the "energy exploding" problem common in EBM training, we apply Spectral Normalization to the linear layers in the scalar output head.

\subsection{Hyperparameters}

We train \model using the AdamW optimizer with the hyperparameters detailed in Table~\ref{tab:hyperparams}.

\begin{table}[h]
\centering
\caption{Hyperparameters for \model Training and Inference.}
\label{tab:hyperparams}
\begin{small}
\begin{tabular}{ll}
\toprule
\textbf{Parameter} & \textbf{Value} \\
\midrule
\multicolumn{2}{c}{\textit{Architecture}} \\
Backbone & 1D Conditional U-Net \\
Input & State Condition \\
Downsampling channels & $[64, 128, 256]$ \\
Activation Function & Mish \\
Pooling & Global Average Pooling \\
\midrule
\multicolumn{2}{c}{\textit{Training}} \\
Optimizer & AdamW \\
Learning Rate & $1.0 \times 10^{-4}$ \\
Weight Decay & $1.0 \times 10^{-6}$ \\
Batch Size & 256 \\
LR Scheduler & Cosine Decay (warmup=500 steps) \\
Gradient Clipping & Norm = 1.0 \\
Noise Schedule & Geometric \\
\midrule
\multicolumn{2}{c}{\textit{Inference}} \\
ODE Solver & Euler Method \\
Steps ($K$) & 20 \\
Prediction Horizon ($T_p$) & 16 \\
Observation Horizon ($T_o$) & 2 \\
\bottomrule
\end{tabular}
\end{small}
\end{table}
\subsection{Baseline Implementation}
\label{appendix:imp_baseline}
To ensure a fair evaluation, we standardize the observation encoders across all baselines. All methods utilize the same MLP-based state encoders and temporal position embeddings described in \S\ref{appendix:imp_ours}. Unless otherwise noted, we tune the hyperparameters of each baseline using a grid search over learning rates $\{10^{-3}, 10^{-4}, 10^{-5}\}$ and batch sizes $\{128, 256\}$.

\subsubsection{Autoregressive and Generative Policies}

\paragraph{LSTM-GMM~\cite{lstmgmm}.}
We implement the LSTM-GMM policy using a standard recurrent backbone. The network consists of a 2-layer LSTM with 256 hidden units. The output head projects the hidden state to the parameters of a Gaussian Mixture Model (GMM) with $K=5$ components, predicting means $\mu$, scales $\sigma$, and mixing coefficients $\pi$. The model is trained via Negative Log-Likelihood (NLL) maximization. During inference, we sample actions from the GMM component with the highest probability.

\paragraph{Diffusion Policy~\cite{dp}.}
To isolate the efficacy of our energy-based formulation from architectural benefits, we implement the Diffusion Policy baseline using the exact same 1D Conditional U-Net backbone as our method (see \S\ref{appendix:imp_ours}). However, instead of a scalar energy head, the baseline retains the standard vector output head to regress the noise $\boldsymbol{\epsilon} \in \mathbb{R}^{T_p \times D_a}$.
\begin{itemize}[leftmargin=*]
    \item \textbf{Training:} We use the DDPM objective with $T=100$ diffusion steps and a squared error loss on the noise prediction.
    \item \textbf{Inference:} We use the DDIM scheduler with 20 denoising steps to match the inference budget of our method.
\end{itemize}

\paragraph{Flow Policy~\cite{flowpolicy}.}
We parameterize the conditional policy using a RealNVP-based Normalizing Flow. The architecture consists of a sequence of 4 coupling layers. Each coupling layer uses a 2-layer MLP (256 hidden units, ReLU activations) as the scale and translation network. The base distribution is a standard isotropic Gaussian. The model is conditioned on the state embedding by concatenating it to the input of the coupling layer MLPs. Training minimizes the negative log-likelihood of the expert actions.

\subsubsection{Energy-Based Methods}

\paragraph{Implicit BC (IBC)~\cite{ibc}.}
We implement IBC using a discontinuous energy parameterization. The energy function is an MLP with 3 layers of 512 hidden units and ReLU activations. Unlike our method, IBC does not enforce differentiability for the inference procedure; instead, it relies on derivative-free optimization.
\begin{itemize}[leftmargin=*]
    \item \textbf{Training:} We use the InfoNCE-style loss with negative samples drawn from a uniform distribution over the action bounds.
    \item \textbf{Inference:} We employ the Derivative-Free Optimizer (DFO) proposed in the original paper (Autoregressive derivative-free search) to find the energy minimum.
\end{itemize}

\paragraph{EBT-Policy~\cite{ebt}.}
Following the official implementation, we use a Transformer-based architecture to parameterize the energy function. The model processes the state and action sequence as tokens. We use a 4-layer Transformer Encoder with 4 attention heads and an embedding dimension of 128. The model is trained using Noise Contrastive Estimation (NCE). Inference is performed using Langevin Dynamics for $K=100$ steps with a step size of $0.01$.

\subsubsection{Inverse Reinforcement Learning (IRL)}

For IRL baselines, which recover a reward function to train a policy, we use Soft Actor-Critic (SAC) as the underlying RL optimizer. The details can be found in Appendix~\ref{appendix:sac}.

\paragraph{IQ-Learn~\cite{iql}.}
We implement IQ-Learn (Implicit Q-Learning), which avoids adversarial training by learning a Q-function that implicitly represents both the reward and the policy. We use a clipped double-Q architecture (MLP with sizes [256, 256]). The policy is defined as $\pi(\boldsymbol{a}|\boldsymbol{s}) \propto \exp(Q(\boldsymbol{s}, \boldsymbol{a}) - V(\boldsymbol{s}))$. 
\paragraph{EBIL~\cite{ebil}.}
Energy-Based Imitation Learning (EBIL) is trained using an adversarial setup. We use an MLP-based energy function $E_\psi(\boldsymbol{s}, \boldsymbol{a})$ as the discriminator/reward. The policy is optimized to maximize the cumulative energy values via SAC, while the energy function is updated to assign lower energy to expert data and higher energy to policy samples using a partition function approximation.

\paragraph{NEAR~\cite{near}.}
We implement NEAR with its proposed NCSN neural network with ELU activations. The NCSN noise
scale was defined as a geometric sequence with $\sigma_1 = 20$, $\sigma_L = 0.01$, and $L$ = 50. The exponentially moving average (EMA) of the weights of the energy network during training is tracked and used during inference to enchance stability in the sample quality.
\subsection{RL Implementation}
\label{appendix:imp_rl}

\subsubsection{Soft Actor-Critic Algorithm}
\label{appendix:sac}
We use Soft Actor-Critic (SAC)~\cite{sac} as the off-policy RL optimizer in Sec~\ref{sec:experiment}. SAC learns a stochastic policy $\pi_\phi(\mathbf{a}\mid \mathbf{s})$ together with two action-value functions $Q_{\theta_1}(\mathbf{s},\mathbf{a})$ and $Q_{\theta_2}(\mathbf{s},\mathbf{a})$ (clipped double-Q) and their target networks.
The policy outputs a diagonal Gaussian distribution, sampled via the reparameterization trick, followed by a $\tanh$ squashing function to enforce bounded actions; the squashed outputs are then linearly rescaled to the environment’s valid ranges. 

Given a data batch $(\mathbf{s}_t,\mathbf{a}_t,\mathbf{s}_{t+1},d_t)$ with $d_t\in\{0,1\}$ is the success indicator, SAC minimizes the soft Bellman error. Let $\mathbf{a}_{t+1}\sim \pi_\phi(\cdot\mid \mathbf{s}_{t+1})$. The target is
\begin{equation}
y_t = r(\mathbf{s}_t,\mathbf{a}_t,\mathbf{s}_{t+1}) + \gamma(1-d_t)\left(\min_{i\in\{1,2\}} Q_{\bar\theta_i}(\mathbf{s}_{t+1},\mathbf{a}_{t+1})
- \alpha \log \pi_\phi(\mathbf{a}_{t+1}\mid \mathbf{s}_{t+1})\right),
\end{equation}
where $\gamma$ is the discount factor, $\alpha$ is the entropy temperature, and $Q_{\bar\theta_i}$ are target critics. Each critic is updated by
\begin{equation}
\mathcal{L}_Q(\theta_i)=\mathbb{E}\left[\left(Q_{\theta_i}(\mathbf{s}_t,\mathbf{a}_t)-y_t\right)^2\right].
\end{equation}
The actor objective is
\begin{equation}
\mathcal{L}_\pi(\phi)=
\mathbb{E}\left[\alpha \log \pi_\phi(\mathbf{a}_t\mid \mathbf{s}_t) - \min_{i\in\{1,2\}} Q_{\theta_i}(\mathbf{s}_t,\mathbf{a}_t)\right],
\end{equation}
with $\mathbf{a}_t\sim\pi_\phi(\cdot\mid \mathbf{s}_t)$. We use automatic entropy tuning with target entropy $\mathcal{H}_{\text{tgt}}=-\dim(\mathbf{a})$. Target networks are updated by Polyak averaging $\bar\theta_i \leftarrow \tau \theta_i + (1-\tau)\bar\theta_i$.

\subsubsection{Experience Replay Buffer}
\label{appendix:replay}

We use experience replay to stabilize SAC training. The replay buffer $\mathcal{D}_{\text{agent}}$ contains transitions collected under the current policy.  $\mathcal{D}_{\text{agent}}$ is a FIFO buffer with a fixed capacity; once full, the oldest transitions are overwritten. We sample uniformly from $\mathcal{D}_{\text{agent}}$ for SAC actor/critic updates. We use the environment-provided termination signal and store it as $d_t$. with time-limit truncation as timeouts.




\subsection{OOD Perturbation Implementation}
\label{appendix:imp_ood}
We adopt the out-of-distribution (OOD) perturbation protocols from \citet{perturb}. The S and M perturbation levels correspond to the $D_0$ and $D_1$ datasets, respectively, which feature progressively larger perturbations to the initial positions of objects that the agent must contact. The L perturbation level ($D_3$ dataset) extends this protocol by additionally perturbing the positions of target positions or objects.

\section{Experiment Tasks}
\subsection{Simulation Tasks}
\label{appendix:tasks}
\paragraph{RoboMimic Tasks}
In the \texttt{Can} task, the robot needs to lift a soda can from
one box and put it into another box. In the \texttt{Lift} task, the
robot needs to lift a cube above a certain height. In the
\texttt{Square} task, the robot needs to fit the square nut onto
the square peg. The \texttt{Transport} task entails the collaborative
effort of two robot arms to transfer a hammer from a closed
container on one table to a bin on another table. One arm is
responsible for retrieving and passing the hammer, while the
other arm cleans the bin and receives the passed hammer.
In \texttt{Tool Hang}, the robot needs to insert the hook into
the base to assemble a frame and then hang a wrench on
the hook. 
\paragraph{Meta-World Tasks}
\texttt{ButtonPress} and \texttt{DrawerOpen} evaluate the agent's ability to interact with articulated objects, requiring the robot to apply force to a switch or manipulate a constrained joint mechanism, respectively. \texttt{BinPicking} tests robust grasping and retrieval from a confined volume. \texttt{Assembly} requires aligning a circular nut with a matching peg under tight tolerances, while \texttt{Hammer} involves tool use, where the agent must grasp a hammer and accurately drive a nail into a target surface.

\section{Additional Experiment Details}
\subsection{Simulation Task Demonstration}
\label{appendix:demo_sim}
\subsubsection{RoboMimic Tasks}
Figure~\ref{fig:robomimic_tasks} illustrates the successful execution of each task on RoboMimic benchmark, with our \model policy.

\begin{figure}[h]
    \centering
    \includegraphics[width=\linewidth]{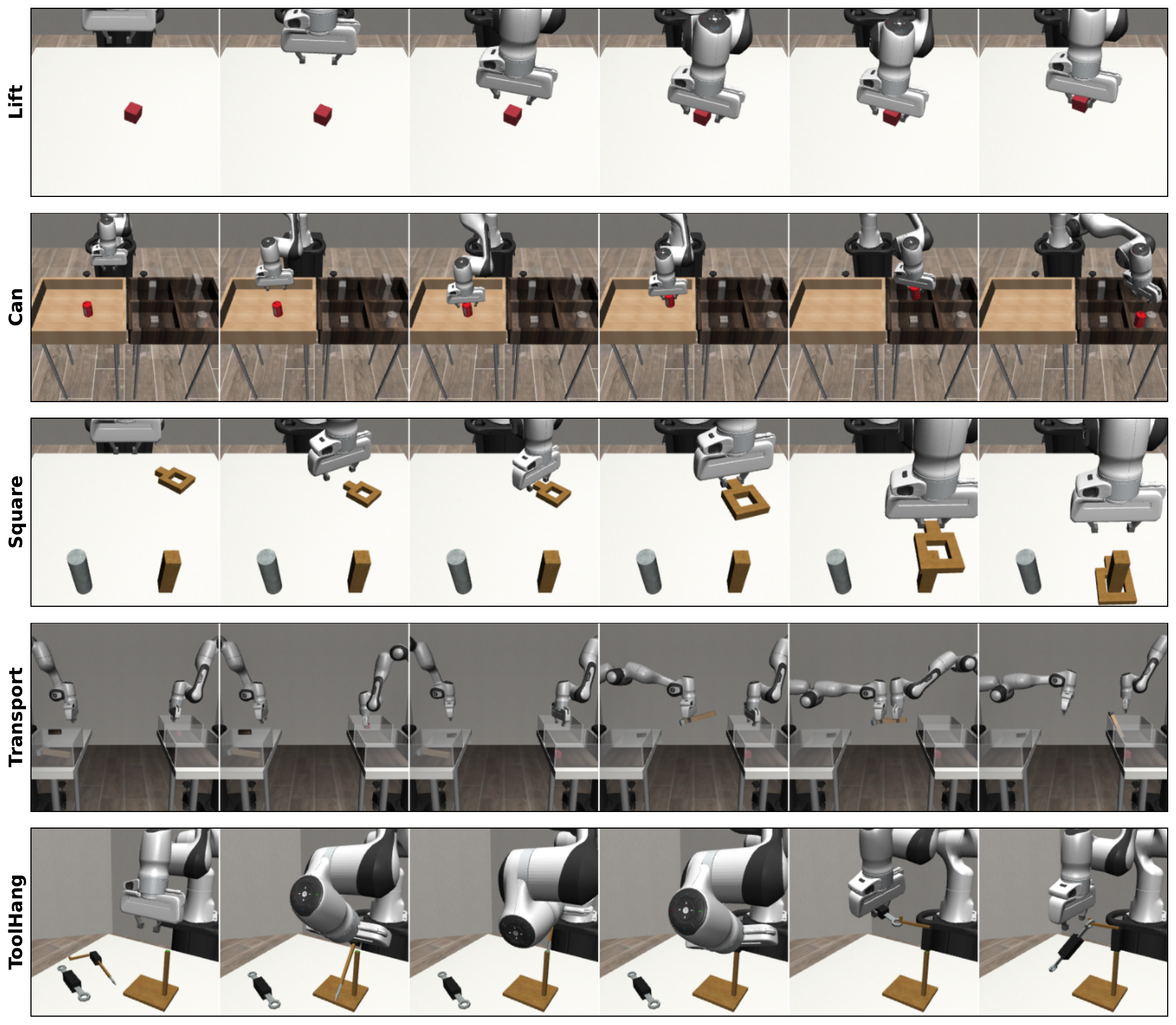}
    \caption{\textbf{RoboMimic task demonstrations.} Each row visualizes a rollout sequence for a different task. }
    \label{fig:robomimic_tasks}
\end{figure}

\subsubsection{Meta-World Tasks}
Figure~\ref{fig:metaworld_tasks} illustrates the successful execution of each task on Meta-World benchmark, with our \model policy.
\begin{figure}[h]
    \centering
    \includegraphics[width=\linewidth]{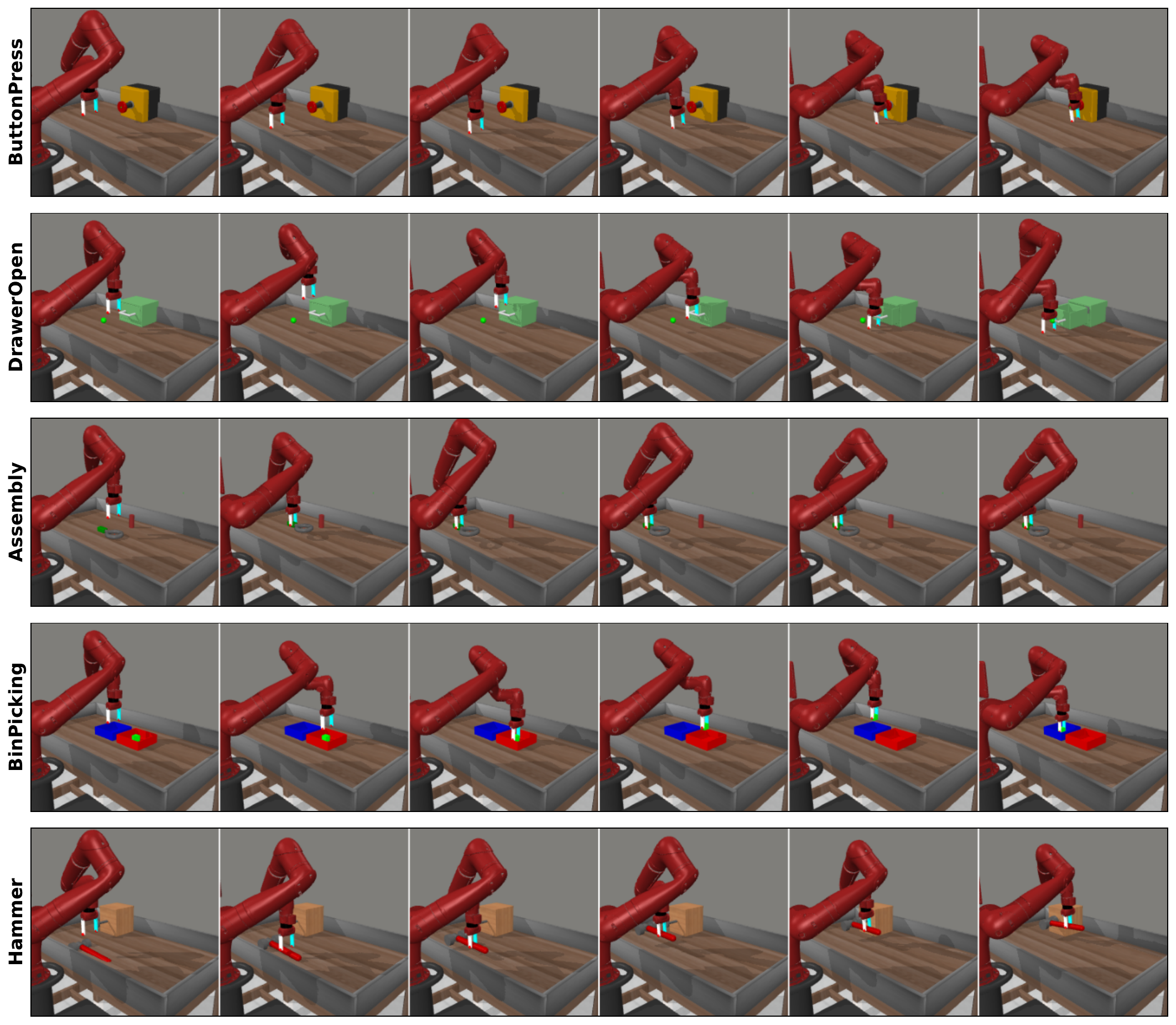}
    \caption{\textbf{Meta-World task demonstrations.} Meta-World task demonstrations. Each row visualizes a rollout sequence for a different task.  }
    \label{fig:metaworld_tasks}
\end{figure}

\subsection{Real Robot Experiment}
\label{appendix:demo_real}

For each task, we collect 10 teleoperated demonstrations. Following~\cite{dp}, we augment training data with random crops. During inference, we take a static
center crop with the same size.  The policy operates at 10Hz, receiving $226 \times 226$ RGB images and outputting 8-dimensional actions (7 joint velocities + gripper command). We use a ResNet-18 encoder~\cite{resnet} pretrained on ImageNet as our visual backbone, consistent with prior work~\cite{dp,act}.


\end{document}